\newcommand{\B}{\mathcal{B}}
\newcommand{\C}{\mathcal{C}}
\newcommand{\D}{\mathcal{D}}
\newcommand{\HH}{\mathcal{H}}
\newcommand{\I}{\mathcal{I}}
\newcommand{\E}{\mathcal{E}}
\newcommand{\G}{\mathcal{G}}
\newcommand{\m}{\mathfrak{m}}
\newcommand{\NN}{\mathcal{N}}
\newcommand{\n}{\mathfrak{n}}
\newcommand{\nn}{\mathfrak{N}}
\newcommand{\QQ}{\mathcal{Q}}
\newcommand{\N}{\mathbb{N}}
\newcommand{\R}{\mathbb{R}}
\newcommand{\Z}{\mathbb{Z}}
\renewcommand{\epsilon}{\varepsilon}
\def\DBS{\operatorname{DBSCAN}}
\def\HDBS{\operatorname{HDBSCAN}}
\def\SDBS{\operatorname{S-DBSCAN}}
\def\SHDBS{\operatorname{S-HDBSCAN}}
\def\Int{\operatorname{int}}
\def\core{\operatorname{core}}
\def\max{\operatorname{max}}
\numberwithin{equation}{section}
\newtheorem{thm}{Theorem}[section]
\newtheorem{lem}[thm]{Lemma}
\newtheorem{cor}[thm]{Corollary}
\newtheorem{prop}[thm]{Proposition}
\newtheorem{obs}[thm]{Observation}
\theoremstyle{definition}
\newtheorem{defn}[thm]{Definition}
\theoremstyle{remark}
\newtheorem{rmk}[thm]{Remark}
\begin{document}

\title{Geometric reconstructions of density based clusterings}

\author{A.\ L.\ Garcia-Pulido}
\email{A.L.Garcia-Pulido@liverpool.ac.uk}
\address{Department Of Computer Science, Ashton Building, University of Liverpool, Liverpool L69 3BX, UK}

\author{K.\ P.\ Samardzhiev}
\email{k.samardzhiev@liverpool.ac.uk}
\address{Geographic Data Science Lab, University of Liverpool, Roxby Building, Liverpool, L69 7ZT
    United Kingdom}

\thanks{ALGP is a member of the Centre for Topological Data Analysis funded by the EPSRC grant ``New Approaches to Data Science: Application Driven Topological Data Analysis'' EP/R018472/1.}

\begin{abstract}
$\DBS^*$ and $\HDBS^*$ are well established density based clustering algorithms. However, obtaining the clusters of very large datasets is infeasible, limiting their use in real world applications.

By exploiting the geometry of Euclidean space, we prove that it is possible to systematically construct the $\DBS^*$ and $\HDBS^*$ clusters of a finite $X\subset \R^n$ from specific subsets of $X$. 
We are able to control the size of these subsets and therefore our results make it possible to cluster very large datasets. 

To illustrate our theory, we cluster the Microsoft Building Footprint Database of the US, which is not possible using the standard implementations.
\end{abstract}

\maketitle

\section{Introduction}

Let $(X,d)$ be a finite metric space. A \emph{clustering} of $X$ often refers to finding disjoint subsets (\emph{clusters}) of $X$ such that points within a subset are closer to each other than to points in the others. A variant of this is density based clustering, where in addition, the points in a cluster must satisfy a given density condition. This clusters $X$ into ``high density'' regions. Density based clustering has received interest from many fields because of its applicability to a vast range of real world problems \cite{covid19clusters,imageSegmentation,influenza, alertstorm,housepriceprediction,statham2020applications}.

A widely used density based clustering algorithm is $\DBS$ \cite{Ester96}, we work with $\DBS^*$ which is a modification of the original algorithm.
Given parameters $\epsilon>0$ and $k\in\N$, $\DBS^*$ first considers the set $\C(\epsilon)\subset X$ of points $p\in X$ for which the cardinality of $B_\epsilon(p)$ at least $k+1$. It then clusters $\C(\epsilon)$ by adding $p,q\in\C(\epsilon)$ to the same cluster if $d(p,q)\leq \epsilon$ (see \Cref{sec:DBSCAN_HDBSCAN}). 
Formally, we consider the $\DBS^*$ clusters as the connected components of a graph with vertex set $\C(\epsilon)$ and $(p,q)$ an edge if $d(p,q)\leq \epsilon$ (see \Cref{absy}).

Since it is often unclear clear how to choose $\epsilon$, $\HDBS^*$ evaluates $\DBS^*$ clusters over the full range of $\epsilon>0$ and, by evaluating how this hierarchy of clusters evolves, selects the most prevalent clusters of the dataset \cite{Campello13}.
A core part of $\HDBS^*$ is the construction of this hierarchy using a complete weighted graph $(X,\omega)$, where $\omega(p,q)$ is the maximum of $d(p,q)$ and the least $\epsilon>0$ such that $p,q\in \C(\epsilon)$ (see \Cref{reach_def}).
The $\DBS^*$ clusters of parameter $\epsilon$ are then the non-trivial connected components of a graph $(X,\omega)_\epsilon$ obtained by removing all edges of weight strictly greater than $\epsilon$.

An essential component of $\DBS^*$ and $\HDBS^*$ are $k$ nearest neighbour searches, but their complexity grows quadratically and the memory requirements increase with the size of the dataset. This makes it impractical, or even impossible, to use these algorithms for cluster analysis of very large datasets.
Although there has been significant effort to improve their performance \cite{galan2019comparative,patwary2012new, he2014mr,McInnes17,Jackson2018,Wang2021,neto2017efficient,de2019faster}, the tested data and parameters are incomparable to those used in real world applications. A thorough benchmark \cite{hbscan_benchmarking} shows that, even when running overnight, it is only feasible to cluster datasets with 5 million points and with $k\leq 5$.
For comparison, the Microsoft Building Footprint Database of the US \cite{MicrosoftBuildingData} has 125 million points and one requires $k\geq 1,000$ to delineate geographical areas of interest.

In this article we introduce a new framework to obtain the $\DBS^*$ and $\HDBS^*$ clusters of very large datasets.
In both cases, we offer an alternative to the explored approaches of optimising and approximating the steps of the algorithms in order to do this. Instead, we provide constructions that avoid the lengthy $k$ nearest neighbour searches and prove that they produce the same clusters.

Our starting point is to demonstrate how it is possible to use the natural partition of Euclidean space into cubes to produce $\DBS^*$ clusters subject to minimising the number of distance calculations (see \Cref{sec:parallel_dbscan}). 
In particular, our construction allows us to skip $k$ nearest neighbour calculations in highly dense areas of $X$, which are a common challenge for implementations such as sklearn.DBSCAN. In fact, since our proof is constructive we use it to derive an algorithm $\SDBS^*$ to construct the $\DBS^*$ clusters, which is parallel and scales to very large datasets.

For $\HDBS^*$, we construct a weighted graph $(X,\omega')$ and prove that, for every $\epsilon>0$, the connected components of $(X,\omega')_\epsilon$ precisely equal those of $(X,\omega)_\epsilon$, see \Cref{thm:1}. To mitigate the asymptotic complexity of calculating $k$ nearest neighbours the main idea is to construct this graph from small independent subsets $S_i\subset X$, whilst ensuring that the connected components are preserved (see Definitions \ref{defn:good_set_reach} and \ref{defn:frankie_stages}). 

As with $\SDBS^*$, we use our proof to give an algorithm, $\SHDBS^*$, that constructs the $\HDBS^*$ clusters, see \Cref{sec:QHDBS}. 
We show that it is possible to iterate through progressively less dense areas of the dataset; partition the data of each iteration into independent, manageable pieces; remove parts of the data that are redundant for the processing of subsequent iterations; and recover the $\HDBS^*$ clusters of the full dataset by processing the small pieces of each iteration.

Reconstructing a global property of a graph relying only on local properties is a very hard task. In our case, a central factor that makes it possible to achieve this is, is our careful selection of the $S_i$ and the identification and explicit description of reduced subsets $\partial S_i$ that contain the relevant points that interact with other $S_j$.

In fact, we develop theory that allows us to study finite subsets of $\R^n$ in a similar way to how one treats open sets in classical topology, see \Cref{sec:int_bdry_sets}. This machinery enables us to incorporate Euclidean geometry onto finite sets, for example, define notions such as boundary and neighbourhood of a set; calculate distances between sets and prove that it is attained near the boundary (see \Cref{defn:int_bdry_closure} and \Cref{prop:dist_attained_near_bdry}).  This point of view is instrumental in our construction of $(X,\omega')$ and in the proof that it has the desired property.  It proves a great computational asset by controlling the number of distance calculations when clustering a real world dataset. 

To put our framework into practice, we include examples where we cluster two real world datasets, using $\SDBS^*$ in \Cref{sec:applications_QDBS}, and $\SHDBS^*$ in \Cref{sec:applications_QHDBS}. Our largest dataset is the entire Microsoft Building Footprint Database of the United States \cite{MicrosoftBuildingData} which we cluster with $k=1,900$.

\section{\texorpdfstring{$\DBS^*$}{DBSCAN*} and \texorpdfstring{$\HDBS^*$}{HDBSCAN*}}\label{sec:DBSCAN_HDBSCAN}

Given an edge-weighted graph $G$, we write $V(G)$ for the set of vertices of $G$, $E(G)$ for the set of edges of $G$ and $\omega(G)$ for the weight function $\omega(G)\colon E(G)\to [0,\infty)$. We will often omit the argument of $G$ whenever it is clear from the context. We will write $\pi_0(G)$ for the set of connected components of $G$.

Let $G,H$ be two weighted graphs. The \emph{union}  $G\cup H$ is the weighted graph with vertex set $V(G)\cup V(H)$, edge set $E(G)\cup E(H)$ and weights
\[
    \omega(p,q) = \begin{cases}\min\{\omega(G)(p,q),\omega(H)(p,q)\} & (p,q)\in E(G)\cap E(H)        \\
             \omega(G)(p,q)                        & (p,q)\in E(G)\setminus E(H)   \\
             \omega(H)(p,q)                        & (p,q) \in E(H)\setminus E(G).\end{cases}
\]

If $\epsilon \in [0,\infty)$ and $G$ is a weighted graph, the graph $G_\epsilon$ will denote the subgraph of $G$ with vertex set $V(G)$, set of edges $E(G_\epsilon) = \{(p,q)\in E(G): \omega(G)(p,q)\leq \epsilon\}$ and with weight $\omega(G_\epsilon):= \omega(G)|_{E(G_\epsilon)}$. For $k\in\N$, let $G_{\epsilon,k}$ denote the subgraph of $G_\epsilon$ induced by vertices of degree at least $k$.

Given a finite metric space $(X,d)$, let $G(X,d)$ be the complete graph on $X$ with weight $\omega(p,q) = d(p,q)$.

\begin{rmk} In the context of topological data analysis, $G(X,d)_\epsilon$ is sometimes referred to as the \textit{$\epsilon$-neighbourhood graph} of $X$ and it is the $1$-skeleton of the well-known \v Cech complex of $(X,d)$.
\end{rmk}

Let $(X,d)$ be a metric space. Given a point $p \in X$ and $\epsilon\geq 0$, we denote by $B_{\epsilon}(p)$ the closed ball of radius $\epsilon$ with centre $p$. The cardinality of a set $A$ will be denoted by $|A|$. Given $A, B\subset X$  and $p\in X$ define the \emph{distance to $A$ from $p$} as
\[
    d(p,A) = \inf_{a\in A} d(x,a),\]
and the \emph{distance between $A$ and $B$} as
\[
    d(A,B) = \inf_{a\in A} d(a,B).
\]
For $\epsilon\geq 0 $ define the \emph{$\epsilon$-neighbourhood} of $A\subset X$ as
\[
    B_\epsilon(A) =  \{p \in X: d(p,A)\leq \epsilon\}.
\]

Throughout this article we work with a fixed $k\in \N$.

In this section we describe two density based clustering algorithms: $\DBS^*$ and $\HDBS^*$.

\subsection{\texorpdfstring{$\DBS^*$}{DBSCAN*}}\label{sec:DBSCAN}

In this section, we fix $(X,d)$ a finite metric space and $\epsilon \geq 0$. 
$\DBS^*$ is an algorithm that clusters regions of $X$ with minimum local density $k+1$. The local density of $X$ is measured using the cardinality of balls with radius $\epsilon$. 

\begin{defn}\label{defn:core_noise_points}
    A point $p\in X$ is called a \emph{core point} if $B_\epsilon(p)$ contains at least $k$ points excluding $p$, that is, if
    \[
        |B_{\epsilon}(p)| > k.
    \] We denote by $\C(\epsilon)$ the set of core points and define the set of \emph{noise points} $\NN(\epsilon) = X\setminus \C(\epsilon)$.
\end{defn}

Notice that $\C(\epsilon) = V(G(X,d)_{\epsilon,k})$.

\begin{defn}\label{absy}
    The set of $\DBS^*(\epsilon)$ \emph{clusters} is
    \[
    \D^*(\epsilon) = \{ V(C):C\in \pi_0(G_{\epsilon,k})\}.    
    \] When $\epsilon$ is clear from the context, we write $\C$, $\NN$ and $\D^*$ instead of $\C(\epsilon)$, $\NN(\epsilon)$ and $\D^*(\epsilon)$, respectively.
\end{defn}

\subsection{\texorpdfstring{$\HDBS^*$}{HDBSCAN*}}\label{sec:HDBSCAN}

$\HDBS^*$ evaluates $\DBS^*(\epsilon)$ clusters over the full range $\epsilon \in [0,\infty)$. By considering how these clusters evolve and persist as $\epsilon$ varies, it is possible to quantify the relative density of a cluster compared to surrounding regions.
The final $\HDBS^*$ clusters are $\DBS^*$ clusters which prevail the most through all scales. These clusters may come from different $\DBS^*$ scales.

We recall the construction of $\HDBS^*$ and its persistence score, described as in \cite{McInnes17}. 
Again, for this section, we fix $(X,d)$ a finite metric space.

\begin{defn}\label{reach_def}
    A \emph{nearest neighbour} of $p\in X$ is an element $p_1\in X$ such that $d(p,p_1) = d(p,X\setminus\{p\})$.
    For $j > 1$, a \emph{$j$-th nearest neighbour} of $p$ is an element $p_j\in X$ which satisfies $d(p,p_j) = d(p,X\setminus\{p,p_1,\cdots,p_{j-1}\})$, where $p_i$ is an $i$-th nearest neighbour for $1\leq i \leq j-1$.
    For $p\in X$, we will write $\core_k(p)$ for the distance from $p$ to a $k$-th nearest neighbour.

    We define the \emph{reachability distance} $\rho \colon X\times X\to \R$ as
    \[
        \rho(p,q) = \max\{\core_k(p),\core_k(q), d(p,q)\}
    \] if $p\neq q$ and $\rho(p,p) = 0$.
\end{defn}

To describe $\HDBS^*$ we first note that for any finite metric space $(X, d)$ and $\epsilon >0$, 
\[
    E(G(X,\rho)_\epsilon) = E(G(X,d)_{\epsilon, k}).
\] Consequently, for $p\neq q \in X$,
\begin{equation}\label{eqn:dbs_components_equal_hdbs_components}
p,q\in\Gamma\text{ for some } \Gamma \in \pi_0(G(X,\rho)_\epsilon) \Leftrightarrow    p,q\in\tilde{\Gamma}\text{ for some } \tilde{\Gamma} \in \pi_0(G(X,d)_{\epsilon, k}).  
\end{equation} Thus, by varying $\epsilon$ from $0$ to $\operatorname{diam}(X)$, $G(X,\rho)_\epsilon$ produces a hierarchy of the $\DBS^*$ clusters. $\HDBS^*$ first derives a summary of this hierarchy by 
defining an equivalence relation on these clusters.

Let $\epsilon\geq 0$. For $0\leq\gamma\leq\epsilon$, the inclusion 
\[
    i_{\gamma} \colon G(X,d)_{\gamma, k}\hookrightarrow G(X,d)_{\epsilon, k},
\] gives the induced map
\[
    (i_{\gamma})_* \colon \pi_0(G(X,d)_{\gamma, k})\hookrightarrow \pi_0(G(X,d)_{\epsilon, k}).
\] For $m\in \N$, set
\[
\D^*(\epsilon, m) = \{C \in \D^*(\epsilon): |C|\geq m \}.
\] Define an equivalence relation on
\[
\bigcup_{\epsilon\in [0,\infty)} \D^*(\epsilon,m)    
\] by $C\sim D$, with $C\in \D^*(\delta,m)$ and $D\in \D^*(\epsilon, m)$ and without loss of generality $\delta\leq\epsilon$, if and only if
\[
(i_\delta)_*(C) = D   \text{ and } |(i_\gamma)_*^{-1}| = 1 
\] for every $\gamma\in[\delta,\epsilon]$. 

Notice that for an equivalence class $[C]$ and $\gamma\in[0,\infty)$
\[ 
|[C]\cap \D^*(\gamma,m)| \in \{0,1\}.
\] Set $C_\gamma$ to be the unique element of $[C]\cap \D^*(\gamma,m)$ if the latter is non empty, and $\emptyset$ otherwise. Define the \emph{persistence score} of an equivalence class $[C]$ by
\[
\Sigma([C]) =\int_{0}^{\infty} \frac{|C_\gamma|}{\gamma^2}.
\] This is a weighted average of the density of the clusters $C_\gamma$ within an equivalence class.

The final step in $\HDBS^*$ is to choose final clusters amongst the equivalence classes. For an equivalence class $[C]$, define
\[
p([C]) = \bigcup_{\epsilon\geq 0} C_\epsilon.    
\] Note that since $X$ is a finite metric space there are only a finite number of equivalence classes, say $\{[C^1],\ldots,[C^\alpha]\}$. We now select $I\subset\{1,\ldots, \alpha\}$ which maximises
\[
\sum_{i\in J} \Sigma [C^i]    
\] amongst all $J\subset\{1,\ldots, \alpha\}$ with
\[
    p([C^i])\cap p([C^j]) = \emptyset
\] whenever $i\neq j\in J$.
The set of $\HDBS^*$ \emph{clusters} is 
\[
   \HH^* =  \{ p([C^i]): i\in I \}.
\]

\section{Reconstructing \texorpdfstring{$\DBS^*$}{DBSCAN*} using covering cubes}\label{sec:parallel_dbscan}
In this section we describe $\SDBS^*$, our parallel algorithm that produces $\DBS^*$ clusters. This algorithm relies on the natural partition of Euclidean space into cubes.

From now on, $X$ will denote a finite subset of $\R^n$ with $n\geq 2$. We will consider subsets of $\R^n$ equipped with either the $l_2$ metric, denoted by $d$, or the reachability distance derived from $d$, denoted by $\rho$. Whilst our ideas are applicable to any norm on $\R^n$ we restrict to this setting for simplicity. 

\begin{defn}\label{defn:cubes}
For $\epsilon > 0$, define $\QQ(\epsilon)$ to be the collection of cubes of the form
\[
\left\{(x_1,\ldots,x_n)\in \R^n:\quad j_i\frac{\epsilon}{2\sqrt{n}} \leq x_i \leq (j_i+1)\frac{\epsilon}{2\sqrt{n}}\right\},
\] with $j \in \Z^n$. 

We say that $S,T\in\QQ(\epsilon)$ are \emph{adjacent} if $S\cap T\neq\emptyset$. For $S\in\QQ(\epsilon)$ and $m\in \N\cup \{0\}$, define the $m$-extension of $S$ by
\[
S^m=\left\{x\in \R^n: \max{|x_i-s_i|}\leq m\frac{\epsilon}{2\sqrt{n}}\text{ for some }s\in S\right\}.    
\]

Let $\m$ be the smallest integer such that $\m\geq 2\sqrt{n}$. Note that $S^{\m}$ is the smallest extension of $S$ satisfying $B_\epsilon(S)\subset S^\m$.

For $A\subset \R^n$, define 
\[
\I(A) = \{S\in\QQ(\epsilon) : S\cap A\neq \emptyset \}. 
\] For $Y\subset \R^n$ we write $Y_A$ for $Y\cap A$. 
\end{defn}

For the rest of the section we fix $\epsilon>0$. Recall that $\C$ are the core points of $\DBS^*$ from \Cref{defn:core_noise_points}.

Note that, because of the choice of side length of cubes $S\in\QQ(\epsilon)$,
\begin{equation*}
    p\in S,\ q\in S^1 \Longrightarrow d(p,q)\leq \epsilon.
\end{equation*}  In particular, 
\begin{equation}\label{eqn:first_ext_same_cluster}
p,q\in S_{\C} \Longrightarrow p,q\in C
\text{ for some } C\in \D^*.
\end{equation}

$\SDBS^*$ constructs a graph $\G= (\I(\C),\E)$ whose connected components are in bijective correspondence with the $\DBS^*$ clusters: $S,T\in \I(\C)$ are in the same connected component of $\G$ if and only if all of the elements of $S_\C$ and $T_\C$ belong to the same $\DBS^*$ cluster.
This relies on the observation in \cref{eqn:first_ext_same_cluster}.

The idea behind the construction of this graph arises from algebraic topology. In terms of covering spaces, a $\DBS^*$ cluster is equivalent to a connected component of $\cup\mathfrak{B}$, where one regards 
\[
   \mathfrak{B} = \{B_{\epsilon/2}(x)\}_{x\in \C}
\] as a covering of $\C$ in $\R^n$.  
We consider two other coverings of $\C$, $\mathfrak{U}=\I(\C)$ (a refinement of $\mathfrak{B}$) and $\mathfrak{W}=\{S^\mathfrak{l}\}_{S\in\I(\C)}$ (refined by $\mathfrak{B}$), with $\mathfrak{l}$ the smallest integer $\mathfrak{l}\geq \sqrt{n}$. 
The connected components of $\cup\mathfrak{U}$ and $\cup\mathfrak{W}$ are an under and over approximation of the connected components of $\cup\mathfrak{B}$. In general, both of $\mathfrak{U}$ and $\mathfrak{W}$ are significantly smaller than $\C$ and $\mathfrak{B}$, especially around highly dense areas of $\C$. Moreover, working with a regular tiling by cubes is more efficient than working with arbitrary balls.

The first step is to identify the core points $\C$ and the cubes containing them $\I(\C)$. For this we:
\begin{itemize}
\item Categorise each cube $S\in\QQ(\epsilon)$ as ``dense", ``sparse" or ``locally dense" depending on the cardinality of a neighbourhood of $S$. 
This is done so that ``dense" cubes only contain core points and ``sparse" cubes only contain noise points.

\item For ``locally dense" $S\in\QQ(\epsilon)$, identify the core points $p\in S$, obtain their $\epsilon$-neighbours $B_\epsilon(p)\cap X$, and identify the cubes containing any $\epsilon$-neighbours of points in $S_\C$. 
\end{itemize}

In the second step we build the graph $\G$: 
\begin{itemize}
    \item Begin with $\G_0 = (\I(\C),\E_0)$, where $(S,T)\in\E_0$ if $S\cap T\neq\emptyset$; that is, the connected components of $\G_0$ correspond to those of $\cup\mathfrak{U}$. We use the results from the first step and the cubes from $\mathfrak{W}$ to remove and add edges to $\G_0$ and obtain $\G$.
\end{itemize}

This approach, and our specific choice of side length of the cubes in $\QQ(\epsilon)$, allows us to identify large portions of $X$ for which the $\DBS^*$ clusters can be partially constructed without the need for any pairwise distance calculations.

The first step only requires us to consider a neighbourhood of a cube and can be computed in parallel. Most of the second step only requires either the neighbourhood of a cube, or a connected component of a graph, and each can be processed in parallel.

Note for computational reasons it is more convenient to use the covers $\{S^1\}_{S\in\I(\C)}$ and $\{S^\mathfrak{\m}\}_{S\in\I(\C)}$, instead of $\mathfrak{U}$ and $\mathfrak{W}$, which we do below.

\subsection{Categorising cubes}

\begin{obs}\label{obs:dense_cubes}
Let $S\in\QQ(\epsilon)$. If $|S_X^1| > k$, then $S_X \subset \C$. If $|S^{\m}_X| \leq k$ or $S_X=\emptyset$, then $S_X\subset\NN$.        
\end{obs}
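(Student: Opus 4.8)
The plan is to unwind the definitions of the $m$-extensions and of core/noise points, and then use the key metric fact recorded just before the observation — namely that points in $S$ and $S^1$ are within distance $\epsilon$ — together with the defining property $B_\epsilon(S)\subset S^{\m}$.

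For the first claim, suppose $|S_X^1|>k$ and let $p\in S_X$. Since $p\in S$, every point of $S_X^1$ lies in $S^1$, so by the displayed implication $p\in S$, $q\in S^1 \Rightarrow d(p,q)\le\epsilon$ we get $S_X^1\subset B_\epsilon(p)$. Hence $|B_\epsilon(p)|\ge |S_X^1|>k$, so $p\in\C$ by \Cref{defn:core_noise_points}. As $p$ was arbitrary, $S_X\subset\C$.

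For the second claim, first dispose of the trivial case $S_X=\emptyset$, where $S_X\subset\NN$ vacuously. Otherwise assume $|S^{\m}_X|\le k$ and let $p\in S_X$. Because $\m$ was chosen as the smallest integer with $\m\ge 2\sqrt n$, \Cref{defn:cubes} gives $B_\epsilon(S)\subset S^{\m}$, and since $p\in S$ we have $B_\epsilon(p)\subset B_\epsilon(S)\subset S^{\m}$, hence $B_\epsilon(p)\cap X\subset S^{\m}_X$. Therefore $|B_\epsilon(p)|\le |S^{\m}_X|\le k$, which by \Cref{defn:core_noise_points} means $p\notin\C$, i.e. $p\in\NN$. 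Again $p$ was arbitrary, so $S_X\subset\NN$.

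There is no real obstacle here; the only thing to be careful about is matching the cardinality conventions in \Cref{defn:core_noise_points} (where $|B_\epsilon(p)|>k$ encodes ``at least $k$ points besides $p$'', and $p$ itself always lies in $B_\epsilon(p)$), and noting that the containment $B_\epsilon(S)\subset S^{\m}$ in the first-claim argument is not even needed — only the cruder $S^1$ estimate is — while it is exactly what powers the second claim. One should also remember that all these balls are closed, so the inequality $d(p,q)\le\epsilon$ suffices to place $q$ in $B_\epsilon(p)$.
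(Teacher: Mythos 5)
Your proof is correct and follows essentially the same route as the paper's: the first claim via the fact that any point of $S$ is within $\epsilon$ of every point of $S^1$ (so $S^1_X\subset B_\epsilon(p)$), and the second via $B_\epsilon(p)\subset S^{\m}$. You simply spell out the cardinality bookkeeping that the paper's two-sentence proof leaves implicit.
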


\begin{proof}
    The first assertion follows from the fact that if $S$ and $T$ are adjacent cubes in $\QQ(\epsilon)$, then $d(p,q)\leq \epsilon$ for any $p\in S$ and $q\in T$. The second is satisfied since, for any $p\in S$, $B_\epsilon(p)\subset S^{\m}$.
\end{proof}

This observation motivates the following definition:
\begin{defn}\label{defn:cube_categories}
    We say that $S\in\QQ(\epsilon)$ is
    \begin{enumerate}
        \item\label{defn:dense} \emph{dense} if $S_X \neq \emptyset$ and $|S^1_X| > k$,
        \item\label{defn:sparse} \emph{sparse} if $|S^{\m}_X| \leq k$ or $S_X=\emptyset$,
        \item\label{defn:locally_dense} \emph{locally dense} if $S_X \neq \emptyset$, $|S^1_X| < k$ and $|S^{\m}_X| > k$.
    \end{enumerate}
\end{defn}

By \Cref{obs:dense_cubes}, we only have to determine the core points of locally dense cubes, which significantly reduces the number of pairwise distance calculations.

\subsection{Identifying core points in locally dense cubes and their \texorpdfstring{$\epsilon$}{E}-neighbours}

We use the following observation to identify the core points in a locally dense cube and their $\epsilon$-neighbours.
\begin{obs}\label{obs:locally_dense}
    Let $S\in\QQ(\epsilon)$ be locally dense. A $p\in S$ is a core point if and only if 
    \[
        |B_\epsilon(p)\cap (S_X^{\m}\setminus S^1)|> k-|S^1_X|.
    \]
\end{obs}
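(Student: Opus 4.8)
The plan is to unwind the definitions. Fix a locally dense cube $S\in\QQ(\epsilon)$ and a point $p\in S$. By \Cref{defn:core_noise_points}, $p$ is a core point precisely when $|B_\epsilon(p)| > k$, i.e.\ when $B_\epsilon(p)$ contains more than $k$ points of $X$ (here $|B_\epsilon(p)|$ abbreviates $|B_\epsilon(p)\cap X|$). The strategy is to decompose $B_\epsilon(p)\cap X$ according to where the points sit relative to the extensions $S^1$ and $S^{\m}$ of $S$, and to observe that two of the three pieces in this decomposition are completely determined by the fact that $S$ is locally dense, so only the third piece carries any information.

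First I would record the geometric containments that pin down the outer pieces. Since $p\in S$, the choice of side length for cubes in $\QQ(\epsilon)$ gives $d(p,q)\le\epsilon$ for every $q\in S^1$, hence $S^1\cap X\subseteq B_\epsilon(p)$; and by \Cref{defn:cubes} we have $B_\epsilon(S)\subseteq S^{\m}$, so in particular $B_\epsilon(p)\subseteq S^{\m}$. Consequently
\[
B_\epsilon(p)\cap X = \bigl(S^1_X\bigr)\ \sqcup\ \bigl(B_\epsilon(p)\cap(S^{\m}_X\setminus S^1)\bigr),
\]
a disjoint union, using that $S^1\subseteq S^{\m}$. Taking cardinalities,
\[
|B_\epsilon(p)\cap X| = |S^1_X| + \bigl|B_\epsilon(p)\cap(S^{\m}_X\setminus S^1)\bigr|.
\]
Now $p$ is a core point iff the left-hand side exceeds $k$, which by the displayed identity is equivalent to $\bigl|B_\epsilon(p)\cap(S^{\m}_X\setminus S^1)\bigr| > k - |S^1_X|$, as claimed. (Note the hypothesis $|S^1_X| < k$ from \Cref{defn:cube_categories} makes the right-hand side positive, and $|S^{\m}_X| > k$ guarantees the count on the left has room to be large enough; neither inequality is actually needed for the logical equivalence, only to make the statement non-vacuous.)

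There is no real obstacle here; the only thing to be careful about is the bookkeeping of which set each point lands in, namely checking that the union $B_\epsilon(p)\cap X = (S^1_X)\sqcup(B_\epsilon(p)\cap(S^{\m}_X\setminus S^1))$ is genuinely disjoint and genuinely exhaustive. Disjointness is immediate since the second set is intersected with the complement of $S^1$. Exhaustiveness is exactly the two containments $S^1_X\subseteq B_\epsilon(p)$ (from the side-length property) and $B_\epsilon(p)\cap X\subseteq S^{\m}_X$ (from $B_\epsilon(S)\subseteq S^{\m}$). Both of these were already invoked in the proof of \Cref{obs:dense_cubes}, so the argument is essentially a refinement of that observation: instead of bounding $|B_\epsilon(p)\cap X|$ from above and below by the trivial counts, we split off the part of $B_\epsilon(p)$ that is fully inside $S^1$ and count the rest exactly.
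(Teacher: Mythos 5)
Your proof is correct and is exactly the intended argument: the paper leaves this observation unproved, but the two containments you use ($S^1_X\subseteq B_\epsilon(p)$ from the side-length choice, and $B_\epsilon(p)\subseteq S^{\m}$ from the definition of $\m$) are precisely those invoked in the proof of \Cref{obs:dense_cubes}, and the disjoint decomposition $B_\epsilon(p)\cap X = S^1_X\sqcup\bigl(B_\epsilon(p)\cap(S^{\m}_X\setminus S^1)\bigr)$ is what the surrounding text relies on when it stores $S^1_X\setminus\{p\}$ alongside the computed neighbours in $S^{\m}\setminus S^1$. Your remark that the ``locally dense'' hypothesis is not needed for the equivalence itself, only to make the test informative, is also accurate.
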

This implies that for every locally dense cube $S\in\QQ(\epsilon)$ and $p\in S$ we only need to calculate $d(p,q)$ for $q\in S^{\m}\setminus S^1$. If we deduce that $p\in S_\C$, we store these $\epsilon$-neighbours of $p$ in a list $\B_p$ and add the elements of $S^1_X\setminus\{p\}$. We also store a list $\B_S$ of the cubes $T$ such that $T_X\cap B_\epsilon(S_\C)\neq \emptyset$.

\subsection{Building the graph}

Let us first consider the graph $\G_1 = (\I(\C),\E_1)$ with $(S,T)\in \E_1$ if
\begin{enumerate}
\item\label{building_1} $S,T$ are adjacent or,
\item\label{building_2} $S,T$ are locally dense, $T\in \B_S$ and $S\in \B_T$ or
\item\label{building_3} $S$ is locally dense, $T$ is dense and $T\in\B_S$.
\end{enumerate}
This graph is easy to construct using the previous results of this section. However, $\pi_0(\G_1)$ only approximates the $\DBS^*$ clusters. 

The first problem is if $S,T\in \I(\C)$ are as in \Cref{building_2}, we cannot guarantee the existence of $p\in S_\C$ and $q\in T_\C$ with $d(p,q)\leq \epsilon$. Compare this to \Cref{building_1,building_3}, where finding such $p,q$ is guaranteed. We prune $\G_1$ to correct this.

Precisely, we calculate $\pi_0(\G_1)$ and for each $\Gamma\in \pi_0(\G_1)$ we remove edges $(S,T)$ with $S,T$ non adjacent and such that for every $p\in S$,
\[
    B_\epsilon(p)\cap T_\C = \emptyset.
\] We do this in parallel across the connected components of $\G_1$ to construct $\G_2 = (\I(\C),\E_2)$. By construction, if $(S,T)\in \E_2$, then there are $p\in S_\C$ and $q\in T_\C$ with $d(p,q)\leq \epsilon$. In particular, $p,q\in C$ for some $C\in \D^*$. By \cref{eqn:first_ext_same_cluster}, $S_\C\cup T_\C\subset C$. 

Conversely, suppose that $p\in S_C$ and $q\in T_\C$ with $d(p,q)\leq\epsilon$. If $S,T$ are adjacent or one of them is locally dense, then by definition $(S,T)\in \E_1$ and this edge is not pruned when constructing $\E_2$.

The remaining problem is if $S,T\in \I(\C)$ are both dense and belong to different connected components of $\G_2$. For every pair of such cubes with $T \subset S^{\m}\setminus S^1$, we calculate pairwise distances to determine if there are $p\in S_X\subset \C$ and $q\in T_X\subset \C$ with $d(p,q)\leq \epsilon$. If so, we add $(S,T)$ to form a final graph $\G$ and stop processing the connected components containing $S$ and $T$.

We handle this case last to avoid as many pairwise distance calculations involving dense cubes as possible; this is crucial as their cardinality can be very large. By doing this step last we increase the likeliness that two dense cubes will be connected by a path constructed earlier in the process.

\section{Geometric constructions relative to covering cubes}\label{sec:int_bdry_sets}

In this section, we fix $\epsilon>0$. Recall from \Cref{defn:cubes} the set of cubes $\QQ=\QQ(\epsilon)$ and its properties. We also fix $A\subset X$ and recall that $\I(A) =\{S\text{ cube}: S_A\neq\emptyset\}$.
We use the elements of $\QQ$ to define standard notions from topology (interior, boundary and closure) of $A$ relative to $\QQ$.
These constructions are a fundamental component of our alternate construction of $\HDBS^*$ clusters.

At the end of the section we demonstrate how these constructions allow us to easily extend $\SDBS^*$ to $\SDBS$, a parallel algorithm that obtains the $\DBS$ clusters, whilst minimising pairwise distance calculations. 

\begin{defn}\label{defn:int_bdry_closure}
A cube $S\in \I(A)$ is an \emph{interior cube} of $A$ if $S^1_X \subset A$ and $T\in\I(A)$ for every cube $T\subset S^1$. The \emph{interior of $A$} is the union of interior cubes,
\[
\Int(A) = \bigcup_{S\text{ interior}} S.
\]

A cube $S\in \I(A)$ is a \emph{boundary cube} of $A$ if $S$ is not an interior cube.
 The \emph{boundary of $A$} is defined as
\[
    \partial A = \bigcup\limits_{S\text{ boundary}} S,
\] and the \emph{closure of $A$} is
\[
    \bar{A} = \Int(A)\cup \partial A= \bigcup\limits_{S\in \I(A)} S.
\]

Let $Y \subset \R^n$ and $N\geq 0$. We define the \emph{$N$-extension} of $Y$ to be
\[
    Y^N = \bigcup\limits_{S\in \I(Y)} S^N
\]
\end{defn}

From now on we will write $\n$ for the smallest integer satisfying $\n\geq \sqrt{n} - 1$. Notice that $\n\geq 1$.

We now show that the distance to a point outside $A$ is attained near the boundary of $A$. 
\begin{prop}\label{prop:dist_attained_near_bdry}
    Let $x\in X\setminus A$ and $r\in\N\cup\{0\}$. For any $p\in A\setminus(\partial A)^{\n+r}_A$, there exists $a\in (\partial A)_A$ such that
    \[
        d(p,x) > d(a,x) + r\frac{\epsilon}{2\sqrt{n}}.
    \]
    In particular, for any $x\in X\setminus A$ and $p\in A$, there exists $a\in (\partial A)^\n_A$ such that 
    \[
    d(a,x)\leq d(p,x).    
    \]
\end{prop}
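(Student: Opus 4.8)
The plan is to prove the quantitative statement first and then deduce the ``in particular'' clause by setting $r=0$. So fix $x\in X\setminus A$, fix $r\in\N\cup\{0\}$, and take $p\in A\setminus(\partial A)^{\n+r}_A$. The key geometric fact I would use is the one recorded implicitly in \Cref{defn:cubes}: any cube in $\QQ(\epsilon)$ has side length $\epsilon/(2\sqrt n)$, so its diameter is exactly $\epsilon/2$, and more usefully, moving from a point of a cube $S$ to a point of $S^{m}$ changes each coordinate by at most $m\,\epsilon/(2\sqrt n)$, hence changes the $l_2$ distance by at most $m\,\epsilon/(2)$ — wait, more precisely by at most $\sqrt n\cdot m\,\epsilon/(2\sqrt n)=m\epsilon/2$. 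I will need to track the constant carefully, since the statement has $\epsilon/(2\sqrt n)$, not $\epsilon/2$; the point is that a single ``step'' from one cube to an adjacent cube moves a coordinate-wise representative by $\epsilon/(2\sqrt n)$ in one coordinate, and I should phrase the walk so that it only ever moves in one coordinate at a time, or alternatively bound the displacement of a straight-line segment crossing $j$ cube-layers.

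Here is the core argument. Let $S\in\I(A)$ be a cube containing $p$. Since $p\notin(\partial A)^{\n+r}_A$, the cube $S$ is not within $\n+r$ extensions of any boundary cube; in particular $S$ is an interior cube, and so are all cubes in a neighbourhood of radius roughly $\n+r$ around $S$. Consider the straight segment from $p$ to $x$. Since $p\in A$ (interior region) and $x\notin A$, this segment must leave $\bar A$, hence it must pass through a boundary cube of $A$: let $a'$ be the point where the segment first meets a boundary cube $T$, and let $a\in T_A$ be any core/ambient point of $T$ lying in $A$ — here I would invoke that boundary cubes of $A$ meet $A$ by definition ($T\in\I(A)$). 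Now I estimate: the portion of the segment from $p$ to $a'$ passes only through interior cubes, and by the ``not within $\n+r$ extensions'' hypothesis, $a'$ is at least $(\n+r)$ cube-layers away from $p$ in the cube grid, which forces $d(p,a')\geq (\n+r)\frac{\epsilon}{2\sqrt n}$ — this is the step where I must be careful: crossing $\ell$ distinct cube-layers along a segment guarantees Euclidean length at least $\ell$ times the layer-width $\frac{\epsilon}{2\sqrt n}$, because each layer has width $\frac{\epsilon}{2\sqrt n}$ in the relevant axis direction and the segment's projection onto that axis must traverse all of them. Meanwhile $a$ and $a'$ lie in the same cube $T$, so $d(a,a')\leq \operatorname{diam}(T)=\epsilon/2\leq \n\frac{\epsilon}{2\sqrt n}$ (using $\n\geq\sqrt n -1$, so $\n\frac{\epsilon}{2\sqrt n}\geq \frac{\epsilon}{2}-\frac{\epsilon}{2\sqrt n}$ — I should double-check this inequality gives what I need; if not, I enlarge the choice of $a$ or argue $a'$ is already far enough). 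Then
\[
d(p,x)=d(p,a')+d(a',x)\geq (\n+r)\tfrac{\epsilon}{2\sqrt n}+d(a',x)\geq (\n+r)\tfrac{\epsilon}{2\sqrt n}+d(a,x)-d(a,a'),
\]
and since $d(a,a')\leq \n\frac{\epsilon}{2\sqrt n}$ this yields $d(p,x)\geq d(a,x)+r\frac{\epsilon}{2\sqrt n}$. Strictness comes from $p\notin\partial A$-neighbourhood forcing the first layer-crossing to be a genuine crossing, or from a slightly sharper count of layers; I would get the strict inequality by noting the segment actually enters cube $T$, so it crosses strictly more than $\n+r$ boundaries between cube centres, or handle it by observing $p$ itself is not in $T$.

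For the ``in particular'' statement, put $r=0$: for $p\in A\setminus(\partial A)^{\n}_A$ we immediately get $a\in(\partial A)_A\subset(\partial A)^{\n}_A$ with $d(a,x)<d(p,x)$; and for $p\in (\partial A)^{\n}_A$ we may simply take $a=p$, which lies in $(\partial A)^\n_A$ and trivially satisfies $d(a,x)\leq d(p,x)$. This splits into the two cases and covers all $p\in A$.

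I expect the main obstacle to be the layer-counting lemma: making precise and correctly-constanted the claim ``a segment from $p$ to the first boundary cube it meets, with $p$ outside the $(\n+r)$-extension of $\partial A$, has length at least $(\n+r)\frac{\epsilon}{2\sqrt n}$''. The subtlety is that ``being outside $(\partial A)^{\n+r}$'' is a statement about cubes (the cube containing $p$ is not $S^{\le \n+r}$ for a boundary $S$), whereas the distance bound is Euclidean, and one must convert cube-grid separation into a lower bound on $l_2$ distance by projecting onto a coordinate axis and counting grid hyperplanes crossed — which is clean but where all the $\sqrt n$'s live. A secondary nuisance is ensuring the endpoint $a$ can be chosen in $(\partial A)_A$ (not merely in the boundary cubes) and controlling $d(a,a')$ within a single cube; both follow from $\operatorname{diam}$ of a cube being $\epsilon/2$ together with $\n\ge\sqrt n-1$, but the bookkeeping of whether one needs $\n$ or $\n+1$ layers of slack should be checked against the exact constant $\frac{\epsilon}{2\sqrt n}$ in the statement.
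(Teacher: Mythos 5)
Your overall strategy -- walk along the segment $[p,x]$, find a boundary cube it crosses, pick $a$ in that cube, and combine a layer-count with the triangle inequality -- is the same as the paper's, and your handling of the ``in particular'' clause (take $a=p$ when $p\in(\partial A)^{\n}_A$, otherwise apply the case $r=0$) is fine. But there is a genuine gap exactly where you flagged one, and your proposed patch does not close it. You choose $T$ to be the \emph{first} boundary cube the segment meets and measure to its \emph{entry} point $a'$. The hypothesis $p\notin(\partial A)^{\n+r}_A$ only yields $d(p,a')\geq(\n+r)\frac{\epsilon}{2\sqrt n}$ (and indeed $T$ may sit in $S^{\n+r+1}\setminus S^{\n+r}$, so this is sharp), while $d(a,a')$ can be as large as $\operatorname{diam}(T)=\epsilon/2$. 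Your claimed inequality $\epsilon/2\leq\n\frac{\epsilon}{2\sqrt n}$ is false: $\n$ is the \emph{least} integer with $\n\geq\sqrt n-1$, so $\n<\sqrt n$ for every $n$, hence $\n\frac{\epsilon}{2\sqrt n}<\epsilon/2$. Running your estimate honestly gives only
\[
d(p,x)\;\geq\;d(a,x)+(\n+r)\tfrac{\epsilon}{2\sqrt n}-\tfrac{\epsilon}{2}\;\geq\;d(a,x)+(r-1)\tfrac{\epsilon}{2\sqrt n},
\]
one full grid layer short of the statement.

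The missing idea, which is how the paper recovers the lost layer, is to choose the reference cube and reference point further along the segment: take $Q_i$ to be the \emph{last} cube on $[p,x]$ lying in $\I(A)\setminus\I(X\setminus A)$ (one checks $Q_i\subset\partial A$ and $Q_i\cap A\neq\emptyset$), and measure from $p$ to the \emph{exit} point $p_i=Q_i\cap Q_{i+1}\cap[p,x]$. Because $p\notin(\partial A)^{\n+r}$ forces $S^{\n+r}\subset\Int(A)$, every cube of $S^{\n+r+1}$ still lies in $\I(A)\setminus\I(X\setminus A)$, so by maximality $Q_{i+1}$ is a grid cube entirely outside $S^{\n+r+1}$; grid alignment then gives $d(p,p_i)\geq d_\infty(p,p_i)\geq(\n+r+1)\frac{\epsilon}{2\sqrt n}$. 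Now the budget closes: with $a\in(\partial A)_A\cap Q_i$ and $d(a,p_i)\leq\epsilon/2$ one gets $d(p,x)>d(a,x)+(\n+r+1)\frac{\epsilon}{2\sqrt n}-\frac{\epsilon}{2}\geq d(a,x)+r\frac{\epsilon}{2\sqrt n}$, using $(\n+1)\geq\sqrt n$ (which \emph{is} true, unlike $\n\geq\sqrt n$). So your proof needs this redefinition of the crossing cube and crossing point; as written, neither ``enlarging the choice of $a$'' nor the first-crossing estimate can supply the missing $\frac{\epsilon}{2\sqrt n}$.
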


\begin{proof}
Let $x,p$ be as in the hypothesis and $S\in\QQ$ with $p\in S$.

First note that
\begin{equation}\label{eqn:extension_in_interior}
    S^{\n+r}\subset \Int(A).
\end{equation} Indeed $S\subset \Int(A)$ because $S\in\I(A)$ and $p\not\in\partial A$. If $i \geq 0$ is maximal such that $S^i\subset\Int(A)$ and $i < \n +r$, there exists $T\in \QQ$ with $T\subset S^{i+1}\cap \partial A$ and this would imply
\[
S\subset T^{i+1}\subset (\partial A)^{i+1}\subset (\partial A)^{\n+r},    
\] contradicting the hypothesis.
Note that \cref{eqn:extension_in_interior} implies
\begin{equation}\label{eqn:last_extension}
 T \in\QQ,\ T\subset S^{\n+r+1} \Longrightarrow T\in\I(A) \text{ and } T\not\in\I(X\setminus A).
\end{equation}

Let $[p,x]$ denote the line segment from $p$ to $x$ and let $0=t_0 < t_1 < t_2<\ldots< t_j=1$ and $p_i = (1-t_i) p + t_i x$ be such that 
\[
[p_{i-1},p_{i}] \in\{ S\cap [p,x]: S\in \I([p,x])\}.
\] We write $Q_i$ for the cube satisfying $[p_{i-1},p_{i}] = Q_i\cap [p,x]$.

Let $i$ be maximal such that $Q_i\in\I(A)$ and $Q_i\not\in\I(X\setminus A)$. In particular, $Q_i\subset\partial A$ and by \cref{eqn:last_extension} $Q_{i+1}\not\subset S^{\n + r + 1}$. Since $p\not\in(\partial A)^{\n+r}\supset (Q_i)^{\n+r}$ we have the strict inequality 
\begin{equation}\label{eqn:zxe}
    d(p,p_i) > (\n + r + 1 )\frac{\epsilon}{2\sqrt{n}}.
\end{equation} 

Let $a\in (\partial A)_A\cap Q_i$, so that
\begin{equation}\label{eqn:points_same_cube}
    d(p_i,a) \leq \frac{\epsilon}{2}.
\end{equation} Since $p,p_i,x$ all lie in the same line segment, we have
\[
    d(p,x) = d(p,p_i) + d(p_i,x).
\] Combining this and \cref{eqn:points_same_cube,eqn:zxe} and using the reverse triangle inequality,
\begin{align*}
    d(p,x) &> (\n+r + 1)\frac{\epsilon}{2\sqrt{n}} + d(x,a) - d(a,p_i)\\
&\geq (\n+r + 1)\frac{\epsilon}{2\sqrt{n}} + d(x,a) -\frac{\epsilon}{2}\\
&\geq r\frac{\epsilon}{2\sqrt{n}} + d(x,a),
\end{align*} using the fact that $\n+1\geq \sqrt{n}$.
\end{proof}

We next show that in order to find the closure of the extension of a set, we only need to extend its boundary.
\begin{lem}\label{lem:ext_closure_equal_ext_bdry}
    If $r\in \N\cup \{0\}$, then $(\bar{A})^r = \Int(A)\cup (\partial A)^r.$
\end{lem}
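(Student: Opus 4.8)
The plan is to prove the set equality $(\bar A)^r = \Int(A)\cup(\partial A)^r$ by double inclusion, unwinding the definitions of $N$-extension, closure, interior, and boundary from \Cref{defn:int_bdry_closure}. Recall that $(\bar A)^r = \bigcup_{S\in\I(\bar A)} S^r$, and since $\bar A$ is itself a union of cubes of $\QQ$ we have $\I(\bar A) = \I(A)$ (any cube meeting $\bar A$ must meet one of the cubes comprising $\bar A$, hence must meet $A$, and conversely $\I(A)\subset\I(\bar A)$ trivially). So the real task is to show
\[
\bigcup_{S\in\I(A)} S^r = \Int(A)\cup (\partial A)^r.
\]
I would split $\I(A)$ into interior cubes and boundary cubes. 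The boundary cubes contribute exactly $(\partial A)^r = \bigcup_{S\text{ boundary}} S^r$ by definition, so one direction reduces to showing that for an interior cube $S$, the extension $S^r$ is contained in $\Int(A)\cup(\partial A)^r$; the reverse reduces to showing $\Int(A)\subset\bigcup_{S\in\I(A)} S^r$, which is immediate since each interior cube $S$ satisfies $S = S^0\subset S^r$ and $S\in\I(A)$, and $(\partial A)^r$ is visibly a sub-union.

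For the nontrivial containment, take an interior cube $S$ and a point $y\in S^r$. Then $y\in T^r$... more usefully, $y$ lies in some cube $Q\subset S^r$ (the extension $S^r$ is a union of cubes of $\QQ$, namely all $Q$ with $\max|Q\text{-centre} - s|$ small enough). Walk along the cubes from $S$ to $Q$: either every cube encountered up to $Q$ is an interior cube — in which case $Q$ is interior and $y\in Q\subset\Int(A)$ — or there is a first boundary cube $R$ met along the way, with $R\subset S^i$ for some $i\leq r$, whence $S\subset R^i$ and therefore $Q\subset S^r\subset R^{i+r}$... this needs a little care. The cleanest argument: if $Q$ is not an interior cube then $Q$ is a boundary cube (it lies in $\I(A)$ because $Q\subset S^r$ and one checks $S^r\subset\bar A$ when $S$ is interior — this uses that interior cubes have all neighbours in $\I(A)$, propagated outward), so $y\in Q\subset(\partial A)^0\subset(\partial A)^r$; and if $Q$ is an interior cube then $y\in Q\subset\Int(A)$. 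So the key lemma I need along the way is: if $S$ is an interior cube, then every cube $Q\subset S^r$ belongs to $\I(A)$, i.e. $S^r\subset\bar A$. This follows by induction on $r$ using the defining property of interior cubes (all cubes $T\subset S^1$ lie in $\I(A)$), together with the fact that a cube adjacent to an interior cube is in $\I(A)$.

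The main obstacle I anticipate is precisely this propagation step — arguing that $S^r\subset\bar A$ for $S$ interior — because the definition of interior cube only controls the $1$-neighbourhood $S^1$, not all of $S^r$. The induction must be set up so that at each stage one only invokes the $1$-neighbourhood property, and one has to be slightly careful that cubes in $S^{i+1}$ that are not in $S^i$ are each contained in $T^1$ for some $T\subset S^i$, so that membership in $\I(A)$ propagates outward one shell at a time; cubes that turn out to be boundary cubes still lie in $\I(A)$, which is all that is needed. A secondary subtlety is the bookkeeping $\I(\bar A)=\I(A)$ and the observation that $(\partial A)^r$ and $\Int(A)$ are unions of cubes of $\QQ$, so that the whole equality can be checked cube-by-cube; I would state these as a brief preliminary remark rather than belabour them. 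Once the propagation lemma is in hand, the double inclusion is routine.
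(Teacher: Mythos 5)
Your decomposition into interior and boundary cubes matches the paper's, and the easy inclusion is essentially fine, but the key propagation lemma you rely on --- ``if $S$ is an interior cube then $S^r\subset\bar A$'' --- is false, so the containment $S^r\subset \Int(A)\cup(\partial A)^r$ cannot be obtained the way you propose. The defining property of an interior cube only controls its $1$-shell: every $T\subset S^1$ lies in $\I(A)$, but such a $T$ may be a \emph{boundary} cube, and a boundary cube gives no control over \emph{its} neighbours. Your parenthetical ``cubes that turn out to be boundary cubes still lie in $\I(A)$, which is all that is needed'' is exactly where the argument breaks: to push to the next shell you need the cubes of the current shell to be interior, not merely to lie in $\I(A)$. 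Concretely, take $n=2$ and let $A=X$ consist of one point in each square of a $3\times3$ block of $\QQ(\epsilon)$. The centre square $S$ is interior, the eight surrounding squares are boundary cubes, and $S^2$ is the $5\times5$ block whose outer sixteen squares meet no point of $A$; hence $S^2\not\subset\bar A$. The lemma still holds there because those sixteen squares lie in $(\partial A)^1\subset(\partial A)^2$: they are recovered through the extension of the boundary, not through membership in $\I(A)$. That is what the paper's proof does: for a cube $T\subset S^r$ with $T\notin\I(A)$ it takes the least $t$ with $T^t_A\neq\emptyset$ (so $t\leq r$, since $T\subset S^r$ gives $S\subset T^r$ and $S\in\I(A)$), checks that a witnessing cube $Q\subset T^t$ with $Q\in\I(A)$ must be a boundary cube, and concludes $T\subset Q^t\subset(\partial A)^r$. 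Your earlier ``walk from $S$ outward'' sketch was closer to this but measures from the wrong end: the distance that matters is from the outer cube $T$ \emph{inward} to the nearest cube of $\I(A)$, which is at most $r$; measuring from $S$ out to the first boundary cube is what produced your useless exponent $i+r$.

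A secondary inaccuracy: $\I(\bar A)=\I(A)$ is not true. A cube touching a cube that meets $A$ need not itself meet $A$, and since the cubes are closed, every neighbour of a cube of $\I(A)$ lies in $\I(\bar A)$. This does not sink the statement --- any cube of $\I(\bar A)\setminus\I(A)$ is adjacent to some cube of $\I(A)$ that cannot be interior (else the new cube would lie in $\I(A)$), hence lies in $\I(\partial A)$ and its $r$-extension lands in $(\partial A)^r$ --- but it needs an argument rather than being dismissed as bookkeeping.
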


\begin{proof}
    Certainly $(\bar{A})^r \supset \Int(A)\cup (\partial A)^r.$
    To prove the other containment, let $S\in \I(A)$. If $S\subset\partial A$, then $S^r \subset (\partial A)^r$. Otherwise, suppose $S\subset\Int(A)$ and let $T\in\QQ$ be such that $T\subset S^r$. Notice that if $T\in \I(A)$, then $T\subset \bar{A}\subset \Int(A)\cup(\partial A)^r$. 
    
    Now suppose $T\not\in\I(A)$. Then there exists $t>0$ such that $T^{t-1}_A =\emptyset$ and $T^{t}_A \neq\emptyset$.  
    Since $S\subset T^r$ and $S\in\I(A)$, $t<r$. Let $Q\subset T^t$ be such that $Q\in\I(A)$. Then $Q\subset\partial A$ and so $T\subset (\partial A)^t\subset (\partial A)^r$.
\end{proof}

Finally, we note that to find the $\epsilon$-neighbours of $A$ it is sufficient to extend the boundary $\m$ times.
\begin{cor}\label{obs:epsilon_neighood_extension}
    \[
        B_\epsilon(A)\subset \bar{A}^{\m}=\Int(A)\cup (\partial A)^{\m}.
    \]
\end{cor}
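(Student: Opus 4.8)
The plan is to derive this corollary directly from \Cref{prop:dist_attained_near_bdry} together with \Cref{lem:ext_closure_equal_ext_bdry}, which already gives the equality $\bar{A}^{\m}=\Int(A)\cup(\partial A)^{\m}$ for free. So the only real content is the inclusion $B_\epsilon(A)\subset \bar A^{\m}$, and I would prove it by a short case analysis on a point $x\in B_\epsilon(A)$.

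\smallskip

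First I would take $x\in B_\epsilon(A)$, so $d(x,A)\leq \epsilon$. If $x\in A$ then $x\in\bar A\subset\bar A^{\m}$ and there is nothing to do. Otherwise $x\in X\setminus A$ (recall we work inside the finite set $X$, and $A\subset X$), and I would apply the ``in particular'' clause of \Cref{prop:dist_attained_near_bdry}: there exists $a\in(\partial A)^{\n}_A$ with $d(a,x)\leq d(p,x)$ for every $p\in A$; taking the infimum over $p\in A$ gives $d(a,x)\leq d(x,A)\leq\epsilon$. Now let $S\in\QQ$ be a cube containing $a$; since $a\in A$ we have $S\in\I(A)$, and since $a\in(\partial A)^{\n}$, the cube $S$ lies in $(\partial A)^{\n}$, i.e.\ $S\subset T^{\n}$ for some boundary cube $T$. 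The remaining step is to see that $x\in S^{\m}$: because $d(a,x)\leq\epsilon$ and $a\in S$, the choice of side length of cubes in $\QQ(\epsilon)$ (used already in \Cref{obs:dense_cubes} via $B_\epsilon(S)\subset S^{\m}$) forces $x\in B_\epsilon(a)\subset B_\epsilon(S)\subset S^{\m}$. Hence $x\in S^{\m}\subset T^{\n+\m}$.

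\smallskip

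At this point one must be slightly careful about which extension one lands in: the argument above shows $x\in (\partial A)^{\n}$ extended once more by $\m$, i.e.\ $x\in(\partial A)^{\n+\m}$, whereas the claim asserts $x\in(\partial A)^{\m}$. I expect this to be the main (and really the only) obstacle, and I would resolve it by instead applying \Cref{prop:dist_attained_near_bdry} more carefully with the parameter $r$. Concretely, since $\m\geq\n$ is not automatic, the cleaner route is: the cube $S$ containing $a$ already satisfies $S\in\I(A)$, hence $S\subset\bar A$, so $x\in S^{\m}\subset\bar A^{\m}$ directly, without routing through $\partial A$ at all. This avoids the bookkeeping entirely: we need only that $a\in A$ (so that its cube is in $\I(A)$) and $d(a,x)\leq\epsilon$ (so that $x\in S^{\m}$), and then invoke \Cref{lem:ext_closure_equal_ext_bdry} with $r=\m$ to rewrite $\bar A^{\m}=\Int(A)\cup(\partial A)^{\m}$.

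\smallskip

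So the final write-up would be three lines: (1) reduce to $x\in X\setminus A$; (2) get $a\in(\partial A)_A$ (or just $a\in A$) with $d(a,x)\leq d(x,A)\leq\epsilon$ from \Cref{prop:dist_attained_near_bdry}, pick a cube $S\ni a$ with $S\in\I(A)$, and note $x\in B_\epsilon(a)\subset S^{\m}\subset\bar A^{\m}$; (3) apply \Cref{lem:ext_closure_equal_ext_bdry} to identify $\bar A^{\m}$ with $\Int(A)\cup(\partial A)^{\m}$. The one point worth double-checking while writing is the implication $d(a,x)\leq\epsilon,\ a\in S\Rightarrow x\in S^{\m}$, which is exactly the defining property $B_\epsilon(S)\subset S^{\m}$ of the $\m$-extension recorded in \Cref{defn:cubes}.
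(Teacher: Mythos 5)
Your final argument is correct and is essentially the paper's own proof: both reduce to the facts that $B_\epsilon(S)\subset S^{\m}$ for every $S\in\I(A)$ and that $\bigcup_{S\in\I(A)}S^{\m}\subset\bar{A}^{\m}$, with \Cref{lem:ext_closure_equal_ext_bdry} supplying the displayed equality. The initial detour through \Cref{prop:dist_attained_near_bdry} is unnecessary (any $a\in A$ realising $d(x,A)$ suffices, as you note yourself), and your self-correction lands exactly on the paper's two-line argument.
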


\begin{proof}
    For any $S\in \I(A)$, $B_\epsilon(S) \subset S^{\m}$ and so $S^{\m}\subset \bar{A}^{\m}$. In consequence,
    \[
        B_\epsilon(A) \subset \bigcup\limits_{S\in\I(A)}S^{\m}\subset\bar{A}^{\m}.
    \]
\end{proof}

\subsection{\texorpdfstring{$\SDBS$}{S-DBSCAN}}
The original density-based clustering algorithm $\DBS$ was first introduced in \cite{Ester96}. In contrast to $\DBS^*$, $\DBS$ makes a distinction between non-core points which are $\epsilon$-neighbours of core points and those that are not.

\begin{defn}
    An $x\in \NN(\epsilon)$ is a \emph{border point} if $x\in B_\epsilon(C)$ for some $C\in \D^*(\epsilon)$. $\DBS$ assigns any border point $x$ to the first cluster $C\in \D^*(\epsilon)$ that the algorithm finds such that $x\in B_\epsilon(C)$. We denote the set of $\DBS(\epsilon)$ clusters by $\D(\epsilon)$.
\end{defn}

Using $\SDBS^*$ and the results from \Cref{sec:int_bdry_sets}, it is straightforward to obtain $\DBS$ clusters.

\begin{rmk}\label{rmk:DBSCAN}
Fix $C\in \D^*$. By \Cref{obs:epsilon_neighood_extension}, the border points of $C$ are contained in $(\partial C)^{\m}_{\NN}$. By \Cref{prop:dist_attained_near_bdry}, $x\in (\partial C)^{\m}_{\NN}$ is a border point of $C$ if there exists $p\in (\partial C)_C^{\n}$ such that $d(p,x)\leq \epsilon$. For locally dense $S\in\QQ$, we previously calculated the $\epsilon$-neighbours of $p\in S$ and so we can immediately add those that belong to $\NN$. For dense cubes $S\in\QQ$, we can immediately add $S^1_\NN$ to $C$. Thus we only need to restrict to  pairwise distance calculations between core points in a dense cube $S\subset(\partial C)^\n$ and noise points in $T\subset S^{\m}\setminus S^1$.
\end{rmk}

\section{Reconstructing \texorpdfstring{$\HDBS^*$}{HDBSCAN*} using cluster  boundaries}\label{sec:QHDBS}

In this section we use the theory from \Cref{sec:int_bdry_sets} to give an alternate construction to obtain the $\HDBS^*$ clusters of $X$.
In fact, we construct a weighted graph $\mathcal{F}$ satisfying
\begin{equation}\label{eqn:frankie_compatible_hdbscan}
    \{V(C):C\in\pi_0(\mathcal{F}_\alpha)\} = \{V(C):C \in\pi_0 (G(X,\rho)_\alpha)\},
\end{equation} for every $\alpha \geq 0$. The definition of the persistence score and the process used to choose the final clusters from \Cref{sec:HDBSCAN} can be applied to $\mathcal{F}$ to produce the $\HDBS^*$ clusters. Our proofs are constructive and we use them in \Cref{sec:applications_QHDBS} to describe our algorithm $\SHDBS^*$.
 
We construct $\mathcal{F}$ by constructing  smaller weighted graphs restricted to subsets $C_i\subset X$ and combine them to form $\mathcal{F}$. 
Since our central requirement is that $\mathcal{F}$ satisfies \cref{eqn:frankie_compatible_hdbscan} and that its simple construction is computationally feasible for very large $X$, we must ensure that
\begin{enumerate}
    \item\label{enu:small_ext_KNNs} we can prescribe a set $N_k(C_i)$ (the search space for $k$ nearest neighbours) that is simple, comparable in size to $C_i$ and contains the $k$-nearest neighbours of elements in $C_i$;
    \item\label{enu:indep_min_interacion} we can independently construct graphs for each $C_i$ and have minimal interaction between different $C_i$;
    \item\label{enu:sets_to_glue} we can prescribe reduced subsets of $C_i$ that contain the relevant points that interact with other $C_j$.
\end{enumerate} 

In order to meet the above constraints we 
\begin{itemize}
    \item select sets $C_i$ with a uniform bound  $\core_k\leq\epsilon$ so that taking $N_k(C_i)$ to be an extension of $C_i$ (as in \Cref{sec:int_bdry_sets}); that is, $C_i\subset\C(\epsilon)$ \cref{enu:small_ext_KNNs};
    \item iteratively construct our graph using an increasing sequence of $\epsilon$ and choose the $C_i$ at each iteration so that the $N_k(C_i)$ do not intersect other $C_j$ \cref{enu:indep_min_interacion}.
    \item ensure $C_i$ is a union of $\D^*(\epsilon)$ clusters.  Otherwise, since all the elements in a $\D^*(\epsilon)$ cluster are in the same connected component at scale $\epsilon$, we would need to search other $C_j$ for edges of weight in $[0,\epsilon]$ \cref{enu:indep_min_interacion}.
    \item ensure $C_i$ \emph{is} a $\D^*(\epsilon)$ cluster. Otherwise, $C_i$ could be dispersed throughout $X$ and its extension could be the whole of $X$ \cref{enu:indep_min_interacion}.
    \item prove that an extension of $\partial C_i$, for each $C_i$ in an iteration, contains all the points that interact with previous iterations. By construction, the $C_i$ of one iteration do not interact with each other \cref{enu:sets_to_glue}.
\end{itemize}

Having established that $\mathcal{F}$ must be  constructed iteratively using $\DBS^*$ clusters to partition $X$, we begin by considering an initial clustering of $X$, $\D^*(\epsilon_1)$, and obtain the graph $G(C,\rho)$ for every cluster $C\in\D^*(\epsilon_1)$. In practice, one picks $\epsilon_1$ so that calculating $\rho$ in each cluster is feasible.

We then wish to repeat this for a choice of $\epsilon_2 > \epsilon_1$. However, increasing the scale can result in clusters that are too large for local calculations to be efficient or feasible. This would also not take into account the graphs obtained in the initial step. Instead, we consider a subset of $X$ consisting of $\NN(\epsilon_1)$ and an extension of the boundary of each cluster in $\D^*(\epsilon_1)$. 

The key result is to show that the core points that play a role in forming $\DBS^*$ clusters at \textbf{any} subsequent scale $\alpha\geq \epsilon_1$ (i.e. points that merge two clusters or that connect a cluster to a noise point) lie near the boundary of clusters in $\D^*(\epsilon_1)$. Indeed, in \Cref{cor:dist_to_cluster_attained_near_bdry} we determine an explicit size of extension of the boundary (depending only upon $n$) which always contains these points. 

We define the subset $X_2$ containing this extension of each cluster in $\D^*(\epsilon_1)$ and $\NN(\epsilon_1)$ of $X$, see \Cref{defn:good_set_reach}. We make sure that 
the reachability distance \emph{with respect to} $X_2$ coincides with that of $X$ in all points that play a role in forming future $\DBS^*$ clusters, see \Cref{lem:local_core_reach}. We then cluster $X_2$ using $\epsilon_2$ and obtain a graph $G(C,\rho)$ for each cluster $C\in\D^*(\epsilon_2)$. These graphs are then combined with the graphs of the initial step. 

This process can be repeated for any increasing sequence of $\epsilon_i$, see \Cref{defn:frankie_stages}.
The main inductive argument is \Cref{prop:successive_reach_dist}.
To complete the hierarchical clustering, we define $\mathcal F$ by combining the constructed graph with the subgraph of $G(X_{i+1},\rho)$ with edge set
\[
\{(p,q)\in E(G(X_{i+1},\rho)): \omega(p,q)\geq \epsilon_i\},
\] see \Cref{defn:final_frankie}. In \Cref{thm:1}, we prove that \cref{eqn:frankie_compatible_hdbscan} is satisfied.

\begin{rmk}
One could also construct approximations of the $\HDBS^*$ clusters by choosing to terminate the construction of $\mathcal{F}$ at iteration $i$ based on a given condition and discard the remaining noise points. For example, we could terminate the core points exceed a given cardinality threshold, or simply skip the last step for a large $\epsilon_i$. The discarded points will have little effect on the overall clustering because they are the least dense and hence contribute least to the score.
\end{rmk}

Since we work with subsets of $X$ clustered using different values of $\epsilon$, we first introduce some notation to accommodate this. 

For $Y\subset \R^n$ and $\epsilon\geq 0$, we will write $\D^*(Y,\epsilon)$ for the set of $\DBS^*$ clusters of the metric space $(Y,d)$ and write $\C(Y,\epsilon)$ and $\NN(Y,\epsilon)$ for the corresponding sets of core and noise points, respectively.
For $p,q\in Y$, we write $\core^{Y}_k(p)$ and $\rho^Y(p,q)$ for $\core_k(p)$ and $\rho(p,q)$ relative to the metric space $(Y,d)$. Notice $\core^{X}_k(p)\leq \core^{Y}_k(p)$ and $\rho^X(p,q)\leq \rho^Y(p,q)$.

From now on, let $\nn = \n +\m$. Recall that $\n$ and $\m$ are the least integers such that $\n\geq\sqrt{n}-1$ and $\m\geq 2\sqrt{n}$ as defined in \Cref{sec:int_bdry_sets} and \Cref{defn:cubes}, respectively.

The following is an immediate corollary of \Cref{prop:dist_attained_near_bdry}.
\begin{cor}\label{cor:dist_to_cluster_attained_near_bdry}
    Let $\epsilon > 0 $, $Y\subset X$ and  $C\in\D^*(Y,\epsilon)$. For any $p\in C$ and $x\in Y\setminus C$, there exists $a\in (\partial C)^\n_C$ such that 
    \[
    d(a,x)\leq d(p,x).    
    \]
\end{cor}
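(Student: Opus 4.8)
The plan is to derive \Cref{cor:dist_to_cluster_attained_near_bdry} directly from \Cref{prop:dist_attained_near_bdry} by applying the latter inside the metric space $(Y,d)$ rather than $(X,d)$. The statement of \Cref{prop:dist_attained_near_bdry} is phrased for a generic ambient finite set $X$ with a subset $A$, and since $Y\subset X$ is itself a finite subset of $\R^n$, all of the constructions of \Cref{sec:int_bdry_sets} — the cube collection $\QQ=\QQ(\epsilon)$, the interior $\Int(C)$, the boundary $\partial C$, and the extensions $(\partial C)^\n$ — make perfect sense with $Y$ playing the role of the ambient set and $C\subset Y$ playing the role of $A$. Note the cube side length depends only on $\epsilon$ and $n$, so $\QQ(\epsilon)$ is the same family regardless of whether we view points as living in $X$ or in $Y$; only the incidence sets $\I(\,\cdot\,)$ and the sets $(\,\cdot\,)_Y$ change, and these are exactly what the proposition is stated in terms of.

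First I would observe that the hypotheses line up: we are given $\epsilon>0$, a subset $C\in\D^*(Y,\epsilon)$, a point $p\in C$, and a point $x\in Y\setminus C$. Apply the ``in particular'' clause of \Cref{prop:dist_attained_near_bdry} with ambient set $Y$, subset $A=C$, and $r=0$: this yields $a\in(\partial C)^\n_C$ with $d(a,x)\leq d(p,x)$, which is precisely the claimed conclusion. The subscript $C$ on $(\partial C)^\n_C$ means $(\partial C)^\n\cap C$, matching the notation $Y_A = Y\cap A$ from \Cref{defn:cubes}, so the point $a$ indeed lies in $C$ as stated.

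The only thing worth spelling out is why $\partial C$ and $\Int(C)$, as defined via cubes in \Cref{defn:int_bdry_closure}, behave correctly when the ambient set is $Y$ rather than $X$ — i.e., that \Cref{prop:dist_attained_near_bdry}'s proof goes through verbatim with $Y$ in place of $X$. I expect this to be essentially immediate: the proof of \Cref{prop:dist_attained_near_bdry} only uses (a) that the ambient set is finite, (b) the geometry of the cubes $\QQ(\epsilon)$ and their extensions, and (c) the definitions of $\Int$, $\partial$, and $N$-extensions — none of which reference $X$ specifically. So the main (and really only) obstacle is the bookkeeping of confirming that every ingredient of \Cref{prop:dist_attained_near_bdry} is stated relative to an arbitrary finite ambient set, which the excerpt has set up deliberately. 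Hence the corollary is, as the text says, ``an immediate corollary,'' and the proof is a one-line invocation: apply \Cref{prop:dist_attained_near_bdry} to the finite metric space $(Y,d)$ with $A=C$ and $r=0$.
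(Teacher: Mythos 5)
Your proposal is correct and matches the paper exactly: the paper offers no written proof beyond calling this an immediate corollary of \Cref{prop:dist_attained_near_bdry}, obtained precisely as you describe by taking $Y$ as the ambient set, $A=C$, and invoking the ``in particular'' clause. Your extra remark that the cube constructions and the proposition's proof are insensitive to whether the ambient finite set is $X$ or $Y$ is the right point to check and is handled correctly.
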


Next we identify a set for which the relative reachability distance coincides with the global reachability distance for all points that are found using the previous corollary and all noise points.

\begin{defn}\label{defn:good_set_reach}
    For $\epsilon\geq 0$ and $Y\subset X$, define
\[
    J(Y,\epsilon) = \bigcup\limits_{C\in\D^*(Y,\epsilon)}(\partial C)_C^{\n}\bigcup \NN(Y,\epsilon)
\] and
\[
    F(Y,\epsilon) = \bigcup\limits_{C\in\D^*(Y,\epsilon)}(\partial C)_C^{\nn}\bigcup \NN(Y,\epsilon).
\]     
\end{defn}

\begin{lem}\label{lem:local_core_reach}
    Let $Y\subset X$ and $\epsilon >0$. If $p,q \in J(Y,\epsilon)$, then 
    \[ \rho^{F(Y,\epsilon)}(p,q) = \rho^Y(p,q).
\]

\end{lem}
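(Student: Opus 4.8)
The plan is to show that passing from $Y$ to the smaller set $F := F(Y,\epsilon)$ changes neither the relevant core distances nor the relevant pairwise distances, for points in $J := J(Y,\epsilon)$. Recall $\rho^Z(p,q) = \max\{\core_k^Z(p),\core_k^Z(q),d(p,q)\}$, and that $d$ is the ambient metric, so $d(p,q)$ is the same computed in $Y$ or in $F$. Hence it suffices to prove that $\core_k^{F}(p) = \core_k^{Y}(p)$ for every $p\in J$. Since $F\subset Y$ we automatically have $\core_k^{Y}(p)\le \core_k^{F}(p)$ (fewer points can only push the $k$-th nearest neighbour further away), so the whole content is the reverse inequality: the $k$-th nearest neighbour of $p$ in $Y$ already lies in $F$, i.e. $B_{\core_k^Y(p)}(p)\cap Y \subset F$ up to having at least $k$ points of $F$ in that ball.

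First I would split according to the type of point $p\in J$. If $p\in \NN(Y,\epsilon)$, then $\core_k^Y(p) > \epsilon$ by definition of a noise point (its $k$-th nearest neighbour in $Y$ is at distance $>\epsilon$), so $B_\epsilon(p)\cap Y \subset F$ is not quite what we want; instead the argument must be that the ball of radius $\core_k^Y(p)$ around $p$ contains the $k$ nearest neighbours, and these all lie in $F$. Actually the cleaner route: for any $p\in J$ I claim $B_{\epsilon}(p)\cap Y \subset F$ when $p\in \C(Y,\epsilon)$, and more generally that the nearest-neighbour witnesses stay inside $F$. The key geometric input is \Cref{cor:dist_to_cluster_attained_near_bdry}: if $C\in\D^*(Y,\epsilon)$ and $x\in Y\setminus C$, then $d(x,C)$ is attained (or beaten) at some point of $(\partial C)^{\n}_C \subset J$. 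Combined with \Cref{prop:dist_attained_near_bdry} applied with $r = \m$ and $A = C$: for $p\in C\setminus (\partial C)^{\n+\m}_C = C\setminus(\partial C)^{\nn}_C$ and $x\notin C$, there is $a\in(\partial C)_C$ with $d(p,x) > d(a,x) + \m\frac{\epsilon}{2\sqrt n}\ge d(a,x)+\epsilon$. So such an interior point $p$ has \emph{no} points of $Y\setminus C$ within distance $\epsilon$; since $p$ is a core point, all $k$ of its nearest neighbours lie in $C$, in fact in $B_\epsilon(p)\cap C$.

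The main steps in order: (1) fix $p\in J$; if $p\notin\NN(Y,\epsilon)$ then $p\in(\partial C)^{\n}_C$ for some cluster $C$. (2) Show $\core_k^Y(p)\le\epsilon$ for $p\in\C(Y,\epsilon)$ (definition of core point) and that the witnessing $k$ neighbours lie in $B_\epsilon(p)\cap Y$. (3) Show every point of $B_\epsilon(p)\cap Y$ lies in $F$: if $y\in B_\epsilon(p)\cap Y$ and $y\in C'$ for a cluster $C'$, then either $C'=C$ and $y$ is within $\epsilon$ of the boundary-near point $p$, forcing (via \Cref{prop:dist_attained_near_bdry} contrapositive with $r$ chosen so that $\n+r$ just exceeds the extension level of $p$) that $y\in(\partial C)^{\nn}_C$; or $C'\ne C$, and then $d(y,C)\le d(y,p)\le\epsilon$ together with \Cref{cor:dist_to_cluster_attained_near_bdry} shows $y$ is near $\partial C'$, again landing in $(\partial C')^{\nn}_{C'}\subset F$; or $y\in\NN(Y,\epsilon)\subset F$. (4) Handle $p\in\NN(Y,\epsilon)$ directly: $p\in F$ already, and the same neighbourhood analysis on $B_{\core_k^Y(p)}(p)$ shows its points lie in $F$ — here one has to be a little more careful because $\core_k^Y(p)$ may exceed $\epsilon$, so rather than bounding by $\epsilon$ I would argue: among the $k$ nearest neighbours of $p$ in $Y$, each one $y$ is in some cluster or is noise; if in a cluster $C'$, then $d(p,C')\le\core_k^Y(p)$ and I need the witness near $\partial C'$ to still be within the first $k$ — this is where a distance-attained-near-boundary argument replaces $y$ by a boundary-near point without increasing distance, so the $k$-th nearest neighbour distance is unchanged. (5) Conclude $\core_k^F(p)=\core_k^Y(p)$, hence $\rho^F=\rho^Y$ on $J$.

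The step I expect to be the main obstacle is (4) and the precise bookkeeping in (3): showing that the $\n$-extension in the definition of $J$ is exactly matched by the $\nn=\n+\m$-extension in the definition of $F$, i.e. that $\m$ extra layers of cubes suffice to capture everything within distance $\epsilon$ of $(\partial C)^{\n}_C$. This is morally \Cref{obs:epsilon_neighood_extension} ($B_\epsilon(A)\subset \bar A^{\m}$) applied to $A=(\partial C)^{\n}$, but one must check the interaction with the ambient set $Y$ and with the cluster decomposition — in particular that a point of $Y$ within $\epsilon$ of $(\partial C)^\n_C$ but lying in a \emph{different} cluster $C'$ still lands in $F$ via its \emph{own} boundary extension, which is precisely the role of \Cref{cor:dist_to_cluster_attained_near_bdry}. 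Getting the noise-point case to fit the same template, given that core distances of noise points exceed $\epsilon$, is the subtle part and will likely need the reverse-triangle-inequality sharpening from \Cref{prop:dist_attained_near_bdry} (the version with the strict inequality and the $r\frac{\epsilon}{2\sqrt n}$ slack) rather than just the "in particular" consequence.
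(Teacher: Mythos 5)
Your overall architecture matches the paper's: reduce to showing $\core_k^{F(Y,\epsilon)}=\core_k^{Y}$ on $J(Y,\epsilon)$, handle boundary-extension core points by showing $B_\epsilon(p)\cap Y\subset F(Y,\epsilon)$, and handle noise points via \Cref{prop:dist_attained_near_bdry}. The first case is essentially right: the paper proves $B_\epsilon(p)\subset(\partial C)^{\nn}$ by a direct sup-norm triangle inequality, which is exactly the ``morally \Cref{obs:epsilon_neighood_extension}'' computation you anticipate. Your worry about a neighbour $y$ lying in a \emph{different} cluster $C'$ is vacuous, since a core point $y$ with $d(p,y)\leq\epsilon$ is joined to the core point $p$ by an edge of $G_{\epsilon,k}$ and hence lies in the same cluster, while a non-core $y$ lies in $\NN(Y,\epsilon)\subset F(Y,\epsilon)$ anyway; also note that \Cref{prop:dist_attained_near_bdry} cannot be applied ``in contrapositive'' with $x=p$ in the $C'=C$ subcase, because that proposition requires the reference point to lie outside the set.

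The genuine gap is in your step (4), the noise-point case, and you do not close it. Replacing each deep-interior nearest neighbour $y$ of $p$ by a single boundary-near point $a$ with $d(p,a)\leq d(p,y)$ does not suffice: several of the first $k$ neighbours of $p$ could lie deep inside the same cluster, and the replacement could collapse them to fewer distinct points of $F(Y,\epsilon)$, so the $k$-th nearest neighbour distance computed in $F(Y,\epsilon)$ could a priori still increase. The paper's resolution is a counting argument producing, for \emph{each} $y\in Y\setminus F(Y,\epsilon)$, a full set of $k$ points of $F(Y,\epsilon)$ strictly closer to $p$ than $y$: apply \Cref{prop:dist_attained_near_bdry} with $r=\m$ to get $a\in(\partial C)_C$ with $d(p,y)>d(p,a)+\m\frac{\epsilon}{2\sqrt{n}}\geq d(p,a)+\epsilon$; since $a$ is a core point, its $k$ nearest neighbours $a_1,\dots,a_k$ satisfy $d(a,a_i)\leq\epsilon$, whence $d(p,a_i)\leq d(p,a)+\epsilon<d(p,y)$, and each $a_i$ lies in $F(Y,\epsilon)$ by the first case applied to $a$. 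Hence no point of $Y\setminus F(Y,\epsilon)$ can be among the first $k$ neighbours of $p$. You correctly sensed that the strict-inequality, $r$-slack form of \Cref{prop:dist_attained_near_bdry} would be needed, but the essential idea --- converting the $\epsilon$ of slack into $k$ closer witnesses via the neighbours of $a$ --- is missing from your sketch.
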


\begin{proof}
We show that for every $p\in J(Y,\epsilon)$ and $1\leq i\leq k$, any $i$-th nearest neighbour of $p$ in $Y$ is contained in $F(Y,\epsilon)$, from which the result follows.

First suppose that $p\in (\partial C)^{\n}_C$ for some $C\in \D^*(Y,\epsilon)$. We will show that 
\begin{equation}\label{eqn:interior_points_further_than_epsilon}
B_\epsilon(p)_Y\subset F(Y,\epsilon).
\end{equation} Indeed let $y\in B_\epsilon(p)_Y$, then
\[
\max_{1\leq j\leq n}{|y_j-p_j|}\leq d(y,p)\leq \epsilon \leq \frac{\m}{2\sqrt{n}}\epsilon.    
\] Since $p\in(\partial C)^\n$, there exists $x\in \partial C$ with 
\[
\max_{1\leq j\leq n}{|x_j-p_j|}\leq \frac{\n}{2\sqrt{n}}\epsilon.    
\] By the triangle inequality,
\[
\max_{1\leq j\leq n}{|y_j-x_j|}\leq \frac{\nn}{2\sqrt{n}}\epsilon,    
\] and therefore $y\in(\partial C)^\nn\subset F(Y,\epsilon)$, proving \cref{eqn:interior_points_further_than_epsilon}. Since $p\in\C(Y,\epsilon)$, any $i$-th nearest neighbour of $p$ lies in $B_\epsilon(p)$.

Now suppose that $p\in \NN(Y,\epsilon)$. Let $y\in Y\setminus F(Y,\epsilon)$, say $y\in C\setminus (\partial C)^{\nn}_C$ for some $C\in\D^*(Y,\epsilon)$.
Let $a\in (\partial C)_C$ be given by \Cref{prop:dist_attained_near_bdry} with \[r = \nn - \n = \m \geq 2\sqrt{n}\] such that
\[
    d(p,y) > d(p,a) + \m\frac{\epsilon}{2\sqrt{n}}\geq d(p,a) + \epsilon .  
\] If $a_i$ is an $i$-th nearest neighbour of $a$, for some $1\leq i\leq k$, then
\[
d(p,y)> d(p,a) + d(a,a_i)\geq d(p,a_i). 
\]
By \cref{eqn:interior_points_further_than_epsilon} applied to $a$, we have $a_i\in F(Y,\epsilon)$. That is, for any $y\in Y\setminus F(Y,\epsilon)$, we have found $k$ closer points to $p$ that lie in $F(Y,\epsilon)$, as required.
\end{proof}

We next reformulate \Cref{cor:dist_to_cluster_attained_near_bdry} in terms of the reachability distance. 
\begin{cor}\label{cor:reach_dist_to_cluster_attained_near_bdry}
    Let $Y\subset X$ and $\epsilon>0$. Suppose $p\in C$ for some $C\in \D^*(Y,\epsilon)$ and $x\in Y\setminus C$. There exists $a\in (\partial C)_C^\n\subset J(Y,\epsilon)$ such that 
    \[
    \rho^Y(a,x)\leq \rho^Y(p,x).
    \]
\end{cor}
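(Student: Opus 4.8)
The plan is to deduce this corollary directly from \Cref{cor:dist_to_cluster_attained_near_bdry} by unwinding the definition of the reachability distance. Recall that $\rho^Y(p,x) = \max\{\core^Y_k(p),\core^Y_k(x),d(p,x)\}$ and similarly $\rho^Y(a,x) = \max\{\core^Y_k(a),\core^Y_k(x),d(a,x)\}$. So to prove $\rho^Y(a,x)\leq\rho^Y(p,x)$ it suffices to bound each of the three terms $\core^Y_k(a)$, $\core^Y_k(x)$ and $d(a,x)$ by $\rho^Y(p,x)$.

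First I would apply \Cref{cor:dist_to_cluster_attained_near_bdry} to obtain $a\in(\partial C)^\n_C$ with $d(a,x)\leq d(p,x)\leq\rho^Y(p,x)$; this dispenses with the third term. The term $\core^Y_k(x)$ is trivial: it is already one of the terms in the maximum defining $\rho^Y(p,x)$, so $\core^Y_k(x)\leq\rho^Y(p,x)$. The only remaining term is $\core^Y_k(a)$, and here I would use that $C\in\D^*(Y,\epsilon)$, so in particular $a\in C\subset\C(Y,\epsilon)$ is a core point of $(Y,d)$. By \Cref{defn:core_noise_points} this means $|B_\epsilon(a)_Y|>k$, so the $k$-th nearest neighbour of $a$ in $Y$ lies within distance $\epsilon$ of $a$, i.e. $\core^Y_k(a)\leq\epsilon$.

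To finish I need $\epsilon\leq\rho^Y(p,x)$. This holds because $p$ and $x$ lie in different connected components of $G(Y,\rho)_\epsilon$ — indeed $p\in C$ and $x\in Y\setminus C$, and $C$ is a full connected component of $G(Y,d)_{\epsilon,k}$, hence (via \cref{eqn:dbs_components_equal_hdbs_components}) of $G(Y,\rho)_\epsilon$ — so there can be no edge of weight $\leq\epsilon$ between them, forcing $\rho^Y(p,x)>\epsilon$. (Strictly speaking one should also note $p$ is a core point so $p$ is a genuine vertex; if $x\notin\C(Y,\epsilon)$ it is isolated in $G(Y,\rho)_\epsilon$, and the conclusion $\rho^Y(p,x)>\epsilon$ still holds since an edge $(p,x)$ of weight $\leq\epsilon$ would need $x$ to have degree $\geq k$ there.) Combining the three bounds gives $\rho^Y(a,x)=\max\{\core^Y_k(a),\core^Y_k(x),d(a,x)\}\leq\rho^Y(p,x)$.

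I expect the one genuinely delicate point to be the justification that $\epsilon\leq\rho^Y(p,x)$, i.e. the book-keeping around core versus noise points and the precise relationship between $\D^*(Y,\epsilon)$ clusters and connected components of $G(Y,\rho)_\epsilon$; everything else is a routine unwinding of the $\max$ in the definition of $\rho$. It is worth double-checking whether the paper actually needs the strict inequality $\rho^Y(p,x)>\epsilon$ or merely $\rho^Y(p,x)\geq\epsilon$ would suffice, since $\core^Y_k(a)\leq\epsilon$ already, a non-strict bound is enough for the conclusion; this makes the argument more robust even in edge cases where $x$ happens to be a core point that is simply not adjacent to $C$.
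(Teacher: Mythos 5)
Your proof is correct and follows essentially the same route as the paper: both invoke \Cref{cor:dist_to_cluster_attained_near_bdry} to produce $a$ with $d(a,x)\leq d(p,x)$ and then use $\core^Y_k(a)\leq\epsilon$ (since $a$ is a core point of $(Y,d)$) to discard that term from the maximum defining $\rho^Y(a,x)$. If anything, your case split justifying $\epsilon\leq\rho^Y(p,x)$ is more careful than the paper's, which asserts $\epsilon\leq d(a,x)$ --- an inequality that can fail when $x$ is a noise point within distance $\epsilon$ of $a$ --- although the paper's identity $\rho^Y(a,x)=\max\{\core^Y_k(x),d(a,x)\}$ still holds in that case because then $\core^Y_k(x)>\epsilon\geq\core^Y_k(a)$.
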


\begin{proof}
Let $a\in (\partial C)^\n_C$ be given by \Cref{cor:dist_to_cluster_attained_near_bdry} so that $d(a,x)\leq d(p,x)$. Since $a\in C$ and $x\not\in C$,
\[
\core^Y_k(a)\leq \epsilon\leq d(a,x).    
\] Therefore,
\[
\rho^Y(a,x) =\max\{\core^Y_k(x),d(a,x)\}\leq \max\{\core^Y_k(x),d(p,x)\}  \leq \rho^Y(p,x).  
\]
\end{proof}

We now consider a graph constructed using $F$. It depends on an increasing sequence $0 = \epsilon_0 < \epsilon_1 < \ldots$, which we now fix.
\begin{defn}\label{defn:frankie_stages}
Set $X_1=X$ and $X_{i+1} = F(X_i,\epsilon_i)$ for $i\geq 1$. 
Let $H(0)$ be the weighted graph with vertex set $X_1$ and no edges. For $i\geq 0$ define
\[
H(i+1) = \bigcup\limits_{C\in \D^*(X_{i+1},\epsilon_{i+1})}G(B_{\epsilon_{i+1}}(C),\rho^{B_{\epsilon_{i+1}}(C)})_{\epsilon_{i+1}} \bigcup H(i). 
\]    
\end{defn}

\begin{prop}\label{prop:successive_reach_dist}
Let $p_0,q_0\in X$. For every $i\in \N$, there exist $p_i,q_i\in J(X_i,\epsilon_i)$ such that
\begin{itemize}
    \item $p_{i-1} = p_i$ or $p_{i-1},p_i\in A_i$ for $A_i\in\D^*(X_i,\epsilon_i)$,
    \item $q_{i-1} = q_i$ or $q_{i-1},q_i\in B_i$ for $B_i\in\D^*(X_i,\epsilon_i)$, 
\end{itemize} and
\begin{equation}\label{eqn:successive_reach_dist}
\rho^{X_i}(p_i,q_i) \leq \rho^{X_{i-1}} (p_{i-1},q_{i-1}).    
\end{equation}
In particular, there exist $p_i,q_i\in J(X_i,\epsilon_i)$ with
\[
\rho^{X_{i+1}}(p_i,q_i)\leq \rho^X(p_0,q_0)    
\]
and paths in $H(i)$ that join $p_0$ to $p_i$ and $q_0$ to $q_i$.
\end{prop}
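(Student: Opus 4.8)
The plan is to prove the main chain of inequalities by induction on $i$, and then deduce the ``in particular'' statements as consequences. The base case $i=0$ is trivial by taking $p_0=q_0$ themselves; more precisely, the induction starts from the pair $(p_0,q_0)\in X=X_1$. For the inductive step, suppose we have produced $p_i,q_i\in J(X_i,\epsilon_i)$ with the stated properties. I would first locate $p_i$ inside the clustering $\D^*(X_i,\epsilon_i)$: either $p_i\in\NN(X_i,\epsilon_i)$, in which case I set $p_i' = p_i$, or $p_i\in A$ for some $A\in\D^*(X_i,\epsilon_i)$, in which case I would like to replace $p_i$ by a point $p_i'\in(\partial A)^{\n}_A\subset J(X_i,\epsilon_i)$ that does not increase the reachability distance to $q_i$. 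This is exactly what \Cref{cor:reach_dist_to_cluster_attained_near_bdry} provides, applied with $Y=X_i$, $\epsilon=\epsilon_i$, $x=q_i$ — but it requires $q_i\notin A$. The same move is made symmetrically for $q_i$. Once both endpoints lie in $J(X_i,\epsilon_i)$ and, crucially, are \emph{noise points or boundary-extension points}, I would invoke \Cref{lem:local_core_reach} (with $Y=X_i$, $\epsilon=\epsilon_i$) to conclude $\rho^{F(X_i,\epsilon_i)}(p_i',q_i') = \rho^{X_i}(p_i',q_i')$, i.e. $\rho^{X_{i+1}}(p_{i+1},q_{i+1}) = \rho^{X_i}(p_i',q_i')\leq\rho^{X_i}(p_i,q_i)$, and then set $p_{i+1}=p_i'$, $q_{i+1}=q_i'$. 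Note $p_i'$ and $p_i$ either coincide or lie in the common cluster $A_i=A$, as required by the bullet points.

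The main obstacle is the case distinction when $p_i$ and $q_i$ lie in the \emph{same} cluster $A\in\D^*(X_i,\epsilon_i)$: then \Cref{cor:reach_dist_to_cluster_attained_near_bdry} cannot be applied to push either endpoint to the boundary, since its hypothesis demands $x\in Y\setminus C$. I expect this is handled by observing that if $p_i,q_i\in A$ for the same $A$, then they are already in the same connected component of $G(B_{\epsilon_i}(A),\rho^{B_{\epsilon_i}(A)})_{\epsilon_i}$, which is a subgraph used in the definition of $H(i)$ — so in fact $\rho^{A}(p_i,q_i)\leq\epsilon_i$, and one can simply \emph{stop advancing this pair}, carrying $p_{i+1},q_{i+1}$ forward in a way that keeps them connected in $H(i+1)$ via the added edge set $G(B_{\epsilon_{i+1}}(C),\rho^{\ldots})_{\epsilon_{i+1}}$. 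Care is needed here because $J(X_i,\epsilon_i)$ may not contain $p_i,q_i$ anymore after several steps; but since they already sit in one cluster at scale $\epsilon_i\leq\epsilon_{i+1}$, they remain in one cluster at all later scales, and the bullet-point ``$p_{i-1},p_i\in A_i$'' clause tolerates this. I would treat the case $p_i\notin A\ni q_i$ (and its mirror) by first pushing $p_i$ to the boundary using $x=q_i$, which takes it out of any cluster containing $q_i$ or into a different cluster, and only then dealing with $q_i$.

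For the ``in particular'' clause, having run the induction through step $i$ yields $p_i,q_i\in J(X_i,\epsilon_i)$ with $\rho^{X_i}(p_i,q_i)\leq\rho^X(p_0,q_0)$; applying \Cref{lem:local_core_reach} once more (since $p_i,q_i\in J(X_i,\epsilon_i)$) upgrades this to $\rho^{X_{i+1}}(p_i,q_i) = \rho^{F(X_i,\epsilon_i)}(p_i,q_i) = \rho^{X_i}(p_i,q_i)\leq\rho^X(p_0,q_0)$, as claimed. For the paths in $H(i)$: at each stage, whenever we replaced $p_{j}$ by $p_{j+1}$ with $p_j,p_{j+1}\in A_{j+1}\in\D^*(X_{j+1},\epsilon_{j+1})$, both points lie in $B_{\epsilon_{j+1}}(A_{j+1})$ and $\rho^{B_{\epsilon_{j+1}}(A_{j+1})}(p_j,p_{j+1})\leq\epsilon_{j+1}$ (they are in the same $\DBS^*$ cluster at scale $\epsilon_{j+1}$, hence in the same component of the graph $G(B_{\epsilon_{j+1}}(A_{j+1}),\rho^{\ldots})_{\epsilon_{j+1}}$ appearing in $H(j+1)$), so there is a path joining them in $H(j+1)\subset H(i)$; concatenating these over $j=0,\ldots,i-1$ gives the required path from $p_0$ to $p_i$, and likewise for $q$. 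The one routine check I would carry out explicitly is that these intermediate points indeed lie in the vertex set of the relevant subgraph, i.e. in $B_{\epsilon_{j+1}}(A_{j+1})$, which follows since $p_j\in F(X_j,\epsilon_j)=X_{j+1}$ and $p_j$ is within $\epsilon_{j+1}$ of $A_{j+1}$ in the metric space $X_{j+1}$.
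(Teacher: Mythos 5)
Your overall strategy matches the paper's: induct on $i$, use \Cref{lem:local_core_reach} to pass from $\rho^{X_{i-1}}$ to $\rho^{X_i}$ on $J(X_{i-1},\epsilon_{i-1})\subset X_i$, split into cases according to whether each point is noise or core at scale $\epsilon_i$, and push core points to the boundary extension via \Cref{cor:reach_dist_to_cluster_attained_near_bdry}. The base case, the noise/core and distinct-cluster cases, the derivation of the ``in particular'' inequality, and the path argument (clusters are connected in $H(j)$, and $E(H(j))\subset E(H(i))$) are all essentially the paper's proof.

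However, your treatment of the case where $p$ and $q$ lie in the \emph{same} cluster $A\in\D^*(X_i,\epsilon_i)$ has a genuine gap. You correctly observe that \Cref{cor:reach_dist_to_cluster_attained_near_bdry} is inapplicable there, but your proposed fix --- ``stop advancing this pair'' and carry the points forward unchanged --- breaks the induction. The proposition requires $p_i,q_i\in J(X_i,\epsilon_i)$, and this is not cosmetic: it is exactly what guarantees $p_i,q_i\in F(X_i,\epsilon_i)=X_{i+1}$, so that $\rho^{X_{i+1}}(p_i,q_i)$ is even defined and \Cref{lem:local_core_reach} can be applied at the next stage. If $p_i,q_i$ sit in the interior of $A$ they are deleted from $X_{i+1}$, so they do not ``remain in one cluster at all later scales'' --- they are not in any later $X_j$ at all, and both the inductive step and the final application of \Cref{lem:local_core_reach} in the ``in particular'' clause collapse. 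The paper's resolution is simpler than what you anticipate: when $A_i=B_i$, choose a single point $a\in(\partial A_i)_{A_i}\subset J(X_i,\epsilon_i)$ and set $p_i=q_i=a$. Then $\rho^{X_i}(p_i,q_i)=0$, so \cref{eqn:successive_reach_dist} holds trivially, both bullet conditions are satisfied since $p_{i-1},q_{i-1},a$ all lie in $A_i$, and the invariant $p_i,q_i\in J(X_i,\epsilon_i)$ is preserved. With that replacement your argument goes through; the appeal to connectivity in $H$ that you invoke in this case is still needed, but only for the path statement, not to salvage the inequality.
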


\begin{proof}
    Set $p_{-1} = p_0$, $q_{-1} = q_0$ and $X_{-1} = X_0 = X$. Note that with such definitions \cref{eqn:successive_reach_dist} is satisfied for $i=0$.

    Let $i\in \N$ and suppose that the conclusion holds for $i-1$ and let 
    \[
    p_{i-1},q_{i-1}\in J(X_{i-1},\epsilon_{i-1}) \subset F(X_{i-1},\epsilon_{i-1}) = X_i.
    \] By \Cref{lem:local_core_reach} applied to $p= p_{i-1}$, $q = q_{i-1}$, $\epsilon = \epsilon_{i-1}$ and $Y = X_{i-1}$, we have
    \begin{equation}\label{eqn:equal_reach}
    \rho^{X_i}(p_{i-1},q_{i-1})  = \rho^{X_{i-1}}(p_{i-1},q_{i-1}).
    \end{equation}
    There are four cases to consider. 
    
    First suppose that $p_{i-1},q_{i-1}\in \NN(X_i,\epsilon_i)\subset J(X_i,\epsilon_i)$. Set $p_i = p_{i-1}$ and $q_i=q_{i-1}$, so that \cref{eqn:equal_reach} gives \cref{eqn:successive_reach_dist} in this case.

    Now suppose that $p_{i-1}\not\in\NN(X_i,\epsilon_i)$ and $q_{i-1}\in\NN(X_i,\epsilon_i)$. Let $A_i \in \D^*(X_{i},\epsilon_{i})$ with $p_{i-1}\in A_{i}$ and set 
    \[
        q_i = q_{i-1}\in\NN(X_i,\epsilon_i)\subset J(X_i,\epsilon_i).
    \]  Let $p_i\in (\partial A_{i})^\n_{A_{i}}\subset J(X_i,\epsilon_i)$ be as in \Cref{cor:reach_dist_to_cluster_attained_near_bdry} applied to $Y = X_i$, $p = p_{i-1}$ and $x = q_{i-1} = q_i$ so that
    \[
    \rho^{X_i}(p_i,q_i)\leq \rho^{X_i}(p_{i-1},q_i) = \rho^{X_{i}}(p_{i-1},q_{i-1}).   
    \] Applying \cref{eqn:equal_reach} gives \cref{eqn:successive_reach_dist} as required.

    If $q_{i-1}\not\in\NN(X_i,\epsilon_i)$ and $p_{i-1}\in\NN(X_i,\epsilon_i)$, exchange $p_{i-1}$ and $q_{i-1}$ and apply the previous case.

    Finally, if $p_{i-1},q_{i-1}\not\in \NN(X_i,\epsilon_i)$, there exist $A_i,B_i\in \D^*(X_i,\epsilon_i)$ with $p_{i-1}\in A_i$ and $q_{i-1}\in B_i$. If $A_i=B_i$, setting $p_i = q_i\in (\partial A_i)_{A_i}$ suffices. Otherwise, (similarly to the previous two cases) by two applications of \Cref{cor:reach_dist_to_cluster_attained_near_bdry}, there exist $p_i\in (\partial A_i)^\n_{A_i}\subset J(X_i,\epsilon_i)$ such that 
    \[
    \rho^{X_i}(p_i,q_{i-1})\leq \rho^{X_i}(p_{i-1},q_{i-1}),    
    \] and $q_i\in (\partial B_i)^\n_{B_i}\subset J(X_i,\epsilon_i)$ such that
    \[
    \rho^{X_i}(p_i,q_i)\leq \rho^{X_i}(p_i,q_{i-1}).    
    \] 
By combining these two inequalities with \cref{eqn:equal_reach}, we obtain \cref{eqn:successive_reach_dist}.

For the in particular statement, since $p_i,q_i\in J(X_i,\epsilon_i)$, \cref{eqn:equal_reach} implies 
\[
\rho^{X_{i+1}}(p_i,q_i) = \rho^{X_i}(p_i,q_i)
\leq \rho^{X_{i-1}}(p_{i-1},q_{i-1})\leq \ldots \leq \rho^X(p_0,q_0).
\] 
Also, for any $1\leq j \leq i$, $A\in \D^*(X_j,\epsilon_j)$, and  $a,b\in A$, there is a path in $H(j)$ joining $a$ and $b$ (its vertices are contained in $A$). Since $E(H(j))\subset E(H(i))$, this path is contained in $H(i)$. 
Thus, there exists a path from $p_0$ to $p_i$ in $H(i)$. 
Similarly, there exists a path from $q_0$ to $q_{i}$ in $H(i)$. 
\end{proof}

Using the previous Proposition we are able to construct a graph whose connected components are identical to those produced by $\HDBS^*$ up to a given scale.
\begin{lem}\label{lem:edges}
Let $i\in \N$. For every $\epsilon_{i-1}<\alpha\leq\epsilon_i$,
\begin{enumerate}
    \item\label{lem:edges_1} $E(H(i)_{\alpha})\subset E(G(X,\rho^X)_\alpha)$
    \item\label{lem:edges_2} if $(p,q) \in E(G(X,\rho^X)_\alpha)$ then there is a path in $H(i)_\alpha$ joining $p$ and $q$.
\end{enumerate}
\end{lem}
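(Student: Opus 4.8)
The plan is to prove both inclusions by induction on $i$, using \Cref{prop:successive_reach_dist} as the engine that transports reachability information between the stages $X_j$. For \Cref{lem:edges_1}, one shows that every edge added at stage $i$ already has the ``correct'' weight as a global reachability distance. Concretely, an edge $(p,q)\in E(H(i)_\alpha)$ with $\epsilon_{i-1}<\alpha\le\epsilon_i$ either comes from $H(i-1)$ (handled by the inductive hypothesis, since $\alpha\le\epsilon_i$ but any edge of $H(i-1)$ has the form arising at some earlier stage), or it is a new edge belonging to $G(B_{\epsilon_i}(C),\rho^{B_{\epsilon_i}(C)})_{\epsilon_i}$ for some $C\in\D^*(X_i,\epsilon_i)$. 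In the latter case I would first note $p,q\in B_{\epsilon_i}(C)\subset\bar C^{\m}=\Int(C)\cup(\partial C)^{\m}$ by \Cref{obs:epsilon_neighood_extension}, and then argue that the endpoints relevant to an edge of weight $\le\alpha$ actually lie in $J(X_i,\epsilon_i)$ (an edge of reachability-weight $\le\alpha\le\epsilon_i$ forces both endpoints to be core points of $B_{\epsilon_i}(C)$, which by the boundary-distance estimates of \Cref{prop:dist_attained_near_bdry} / \Cref{lem:local_core_reach} sit inside the $\n$-extension of $\partial C$ after discarding interior points whose $k$-th neighbour is already $>\alpha$). Once $p,q\in J(X_i,\epsilon_i)$, \Cref{lem:local_core_reach} gives $\rho^{X_i}(p,q)=\rho^{F(X_i,\epsilon_i)}(p,q)$, and iterating this identity down the chain $X_i\supset X_{i-1}\supset\cdots\supset X$ (each step a legitimate application of \Cref{lem:local_core_reach}, since at each stage the endpoints remain in the appropriate $J$) yields $\rho^X(p,q)=\rho^{X_i}(p,q)\le\alpha$; hence $(p,q)\in E(G(X,\rho^X)_\alpha)$. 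One also needs the easy monotonicity $\rho^{B_{\epsilon_i}(C)}(p,q)\ge\rho^{X_i}(p,q)$ so that the weight in $H(i)$ is no smaller than $\rho^{X_i}(p,q)$, keeping the inequality in the right direction.

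For \Cref{lem:edges_2}, suppose $(p,q)\in E(G(X,\rho^X)_\alpha)$, so $\rho^X(p,q)\le\alpha\le\epsilon_i$. Apply \Cref{prop:successive_reach_dist} with $p_0=p$, $q_0=q$ to obtain $p_i,q_i\in J(X_i,\epsilon_i)$ with $\rho^{X_{i+1}}(p_i,q_i)\le\rho^X(p,q)\le\alpha$ — and, crucially, by the ``equal reach'' identity \cref{eqn:equal_reach} used along the chain, also $\rho^{X_i}(p_i,q_i)=\rho^{X_{i+1}}(p_i,q_i)\le\alpha$ — together with paths in $H(i)$ from $p$ to $p_i$ and from $q$ to $q_i$. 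It remains to connect $p_i$ and $q_i$ inside $H(i)_\alpha$. Here I would observe that $\rho^{X_i}(p_i,q_i)\le\alpha\le\epsilon_i$ places $p_i,q_i$ in a common $\DBS^*$ cluster of $(X_i,\epsilon_i)$, and more precisely (since reachability $\le\alpha$ means they are joined in $G(X_i,\rho^{X_i})_\alpha$, whose edges at scale $\le\epsilon_i$ are witnessed by chains of points all of whose pairwise $\rho^{X_i}$ are $\le\epsilon_i$, i.e.\ all lying in one cluster $C\in\D^*(X_i,\epsilon_i)$) they lie in a single $C$, with the connecting chain inside $B_{\epsilon_i}(C)$; then $G(B_{\epsilon_i}(C),\rho^{B_{\epsilon_i}(C)})_{\epsilon_i}\subset H(i)$ contains edges realising $\rho^{B_{\epsilon_i}(C)}=\rho^{X_i}$ on $C$ (since for points of $C$ their $k$-nearest neighbours in $X_i$ lie in $B_{\epsilon_i}(C)$, so the two reachability distances agree there), giving a path in $H(i)_\alpha$ from $p_i$ to $q_i$. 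Finally, I must check that the paths joining $p\to p_i$ and $q\to q_i$ from \Cref{prop:successive_reach_dist} actually have all edge-weights $\le\alpha$: those paths live inside $\DBS^*$ clusters $A_j$ (resp.\ $B_j$) at scales $\epsilon_j\le\epsilon_{i-1}<\alpha$, so each edge along them has weight $\le\epsilon_j\le\alpha$, hence lies in $H(i)_\alpha$. Concatenating the three pieces proves \Cref{lem:edges_2}.

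The main obstacle I anticipate is the bookkeeping in \Cref{lem:edges_1}: one must argue that an \emph{arbitrary} low-weight edge produced inside $G(B_{\epsilon_i}(C),\rho^{B_{\epsilon_i}(C)})_{\epsilon_i}$ has endpoints in $J(X_i,\epsilon_i)$ rather than in the interior $\Int(C)\setminus(\partial C)^{\n}$ — this is where the geometry of \Cref{prop:dist_attained_near_bdry} (the strict gap $r\tfrac{\epsilon}{2\sqrt n}$) and the precise choice $\nn=\n+\m$ matter, since a core point of $B_{\epsilon_i}(C)$ deep in the interior of $C$ would actually have its $k$-th nearest neighbour already inside $C$ at distance $\le\epsilon_i$, making it a core point of $X_i$ too and forcing it into some cluster whose internal edges are supplied by an earlier stage. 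A secondary subtlety is making the descending chain of \Cref{lem:local_core_reach} applications airtight: at stage $j$ one needs $p,q\in J(X_{j},\epsilon_{j})$, which is not automatic from $p,q\in J(X_i,\epsilon_i)$ and must be tracked using exactly the structure recorded in \Cref{prop:successive_reach_dist} (that the endpoints either stay fixed or move within a single cluster). Both difficulties are essentially the reason \Cref{prop:successive_reach_dist} was stated with its full two-bullet strength rather than just the inequality, so I expect the proof to be a careful but not conceptually deep unwinding of that proposition.
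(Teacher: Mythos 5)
Your overall skeleton (induction on $i$ plus \Cref{prop:successive_reach_dist} as the transport mechanism) matches the paper, but both halves of your argument contain a genuine gap.

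For \cref{lem:edges_1}, your central step is false: the endpoints of an arbitrary edge of $G(B_{\epsilon_i}(C),\rho^{B_{\epsilon_i}(C)})_{\alpha}$ need \emph{not} lie in $J(X_i,\epsilon_i)$. Two points deep in $\Int(C)$ with small core distances and $d(p,q)\leq\alpha$ give such an edge, and your proposed escape --- that such a point is a core point of $X_i$ and so belongs to a cluster ``whose internal edges are supplied by an earlier stage'' --- is circular: the cluster it belongs to is $C$ itself, a cluster of the \emph{current} stage, and the edge in question is exactly one of those internal edges. Fortunately none of this is needed. Since you only have to show $\rho^X(p,q)\leq\alpha$, not that the stored weight \emph{equals} $\rho^X(p,q)$, the one-line observation you relegate to an afterthought is the whole proof: every constituent graph of $H(i)$ carries weight $\rho^{A}(p,q)$ for some $A\subset X$, and $\rho^{X}\leq\rho^{A}$, so a weight $\leq\alpha$ forces $\rho^X(p,q)\leq\alpha$. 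The paper proves \cref{lem:edges_1} exactly this way (with the base case $i=1$ handled by the identity $\rho^{B_{\epsilon_1}(C)}=\rho^{X}$ on clusters of $\D^*(X,\epsilon_1)$, which in fact gives equality of the two edge sets at that stage). Your anticipated ``main obstacle'' is an artefact of attacking the wrong statement.

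For \cref{lem:edges_2} there is an off-by-one error that breaks the weight bookkeeping. You run \Cref{prop:successive_reach_dist} out to stage $i$, obtain $p_i,q_i\in J(X_i,\epsilon_i)$, and then assert that the connecting paths $p\to p_i$, $q\to q_i$ live in clusters at scales $\epsilon_j\leq\epsilon_{i-1}<\alpha$. But the final step of the proposition moves $p_{i-1}$ to $p_i$ \emph{inside a cluster $A_i\in\D^*(X_i,\epsilon_i)$}, and the path joining them in $H(i)$ is supplied by $G(B_{\epsilon_i}(A_i),\rho^{B_{\epsilon_i}(A_i)})_{\epsilon_i}$, whose edges may have weight anywhere in $(\alpha,\epsilon_i]$; such edges are not in $H(i)_\alpha$. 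The paper avoids this by stopping one stage earlier: it takes $p_{i-1},q_{i-1}\in J(X_{i-1},\epsilon_{i-1})\subset X_i$ with $\rho^{X_i}(p_{i-1},q_{i-1})\leq\rho^X(p,q)\leq\alpha\leq\epsilon_i$, so the connecting paths lie in $H(i-1)$, all of whose edges have weight at most $\epsilon_{i-1}<\alpha$, and the single remaining edge $(p_{i-1},q_{i-1})$ --- which lies in a common cluster of $\D^*(X_i,\epsilon_i)$ and has weight $\rho^{B_{\epsilon_i}(C)}(p_{i-1},q_{i-1})=\rho^{X_i}(p_{i-1},q_{i-1})\leq\alpha$ --- closes the path inside $H(i)_\alpha$. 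Your argument becomes correct once you make this replacement; your parenthetical about ``chains of points'' witnessing the edge is also unnecessary, since $(p_{i-1},q_{i-1})$ is a single edge of the stage-$i$ cluster graph.
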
   

\begin{proof}
    First notice that, for any $m\in \N$ and $\epsilon_{m-1}<\alpha\leq\epsilon_m$,
\begin{equation}\label{eqn:slice_mixed_graph}
    H(m)_\alpha = \bigcup\limits_{C\in \D^*(X_{m},\epsilon_{m})}G(B_{\epsilon_{m}}(C),\rho^{B_{\epsilon_{m}}(C)})_{\alpha} \bigcup H(m-1).
\end{equation} 

We prove \Cref{lem:edges_1} by induction. For any $C\in\D^*(Y,\epsilon)$ and $p,q\in C$, we have $\core_k^{B_\epsilon(C)}(p) = core_k^Y(p)$ and $\rho^{B_\epsilon(C)}(p,q) = \rho^Y(p,q)$. This implies
\[
E(G(X,\rho^X)_\alpha) = \bigcup\limits_{C\in\D^*(X,\epsilon_1)}E(G(B_{\epsilon_1}(C),\rho^{B_{\epsilon_1}(C)})_{\alpha})
=: E(H(1)_\alpha)
\] proving the lemma when $i=1$.

Now assume that \cref{lem:edges_1} is satisfied for $i=m$ and let $\epsilon_{m}<\alpha\leq \epsilon_{m+1}$. Given $C\in\D^*(X_{m+1},\epsilon_{m+1})$, the containment
\[
    E(G(B_{\epsilon_{m+1}}(C),\rho^{B_{\epsilon_{m+1}}(C)})_{\alpha})\subset E(G(X,\rho^X)_\alpha)
\] follows from the fact that $\rho^X \leq \rho^A$ for any subset $A\subset X$. Also by the induction hypothesis, 
\[
E(H(m)_{\epsilon_m})\subset E(G(X,\rho^X)_{\epsilon_{m}})\subset E(G(X,\rho^X)_\alpha).
\] Thus \Cref{eqn:slice_mixed_graph} concludes the proof of \cref{lem:edges_1}.

To prove \cref{lem:edges_2} let $m\in \N$, $\epsilon_{m-1}<\alpha\leq\epsilon_{m}$ and $(p,q) \in E(G(X,\rho^X)_\alpha)$. Applying \Cref{prop:successive_reach_dist} with $p_0 =p$ and $q_0=q$, we obtain points $p_{m-1},q_{m-1}\in J(X_{m-1},\epsilon_{m-1})\subset X_m$ such that 
\[
\rho^{X_m}(p_{m-1},q_{m-1}) \leq \rho^X(p,q)\leq\alpha\leq\epsilon_m   
\] and two paths in $E(H(m-1))\subset E(H(m)_\alpha)$ joining $p_0$ and $p_{m-1}$ and $q_0$ and $q_{m-1}$.
In particular, $(p_{m-1},q_{m-1})\in E(G(B_{\epsilon_{m}}(C),\rho^{B_{\epsilon_{m}}(C)})_{\alpha})$ for some $C\in \D^*(X_m,\epsilon_m)$ and so, by \Cref{eqn:slice_mixed_graph}, $(p_{m-1},q_{m-1})\in E(H(m)_\alpha)$. Combining this edge and the two previous paths gives the required path from $p$ to $q$ in $H(m)_\alpha$.

\end{proof}

Finally we define a graph whose connected components coincide with those of $\HDBS^*$ at all scales.
\begin{defn}\label{defn:final_frankie}
For fixed $i \in \N$ define
\[
\mathcal{F}(i) = G(X_{i+1},\rho^{X_{i+1}})\cup H(i).
\]   
\end{defn}

\begin{thm}\label{thm:1} For any $i\in \N$, $\epsilon_{i-1}<\nu\leq\epsilon_i$, $\alpha\geq 0$ and any $p\neq q\in X$,
\begin{equation}\label{eqn:frankie_chunks}
p,q\in V(\Gamma) \text{ for } \Gamma\in\pi_0(H(i)_\nu) \Longleftrightarrow
p,q\in V(\tilde{\Gamma}) \text{ for } \tilde{\Gamma}\in\pi_0(G(X,\rho)_\nu).
\end{equation}
and
\begin{equation}\label{eqn:complete_Frankie_w_hdbscan}
p,q\in V(\Gamma) \text{ for } \Gamma\in\pi_0(\mathcal{F}(i)_\alpha) \Longleftrightarrow
p,q\in V(\tilde{\Gamma}) \text{ for } \tilde{\Gamma}\in\pi_0(G(X,\rho)_\alpha).
\end{equation}
\end{thm}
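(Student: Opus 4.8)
The plan is to deduce \eqref{eqn:frankie_chunks} directly from \Cref{lem:edges} and then bootstrap \eqref{eqn:complete_Frankie_w_hdbscan} from \eqref{eqn:frankie_chunks} together with a comparison of $\mathcal{F}(i)_\alpha$ against $H(i)_\alpha$ and $G(X,\rho)_\alpha$ in the two ranges $\alpha\leq\epsilon_i$ and $\alpha>\epsilon_i$. For \eqref{eqn:frankie_chunks}: fix $i$ and $\epsilon_{i-1}<\nu\leq\epsilon_i$. The forward implication follows from \Cref{lem:edges}\eqref{lem:edges_1}, since $E(H(i)_\nu)\subseteq E(G(X,\rho)_\nu)$ means every edge of a component $\Gamma$ of $H(i)_\nu$ is an edge of $G(X,\rho)_\nu$, so $p,q$ lie in a common component there. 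The reverse implication follows from \Cref{lem:edges}\eqref{lem:edges_2}: if $p,q$ lie in a common component $\tilde\Gamma$ of $G(X,\rho)_\nu$, pick a path $p=x_0,x_1,\dots,x_\ell=q$ in $\tilde\Gamma$; each edge $(x_j,x_{j+1})\in E(G(X,\rho)_\nu)$ is joined by a path in $H(i)_\nu$, and concatenating these paths joins $p$ to $q$ in $H(i)_\nu$. This gives \eqref{eqn:frankie_chunks}.

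For \eqref{eqn:complete_Frankie_w_hdbscan}, write $\mathcal{F}=\mathcal{F}(i)=G(X_{i+1},\rho^{X_{i+1}})\cup H(i)$ and split on $\alpha$. \textbf{Case $\alpha\leq\epsilon_i$.} Here I would show $\pi_0(\mathcal{F}_\alpha)$ has the same vertex partition as $\pi_0(H(i)_\alpha)$, after which \eqref{eqn:frankie_chunks} (applied with $\nu=\alpha$ if $\alpha>\epsilon_{i-1}$, and with an inner induction down the scales $\epsilon_{i-1},\epsilon_{i-2},\dots$ or simply by the same path-lifting argument across all of $H(i)$ when $\alpha\le\epsilon_{i-1}$) finishes the case. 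The extra edges in $\mathcal{F}_\alpha$ beyond $H(i)_\alpha$ come from $G(X_{i+1},\rho^{X_{i+1}})_\alpha$; since $X_{i+1}=F(X_i,\epsilon_i)\subseteq X$ and $\rho^X\le\rho^{X_{i+1}}$, any such edge $(p,q)$ has $\rho^X(p,q)\le\rho^{X_{i+1}}(p,q)\le\alpha$, so it is already an edge of $G(X,\rho)_\alpha$ and hence, by \Cref{lem:edges}\eqref{lem:edges_2}, its endpoints are already path-connected in $H(i)_\alpha$. Thus these edges add nothing to the partition, and $\pi_0(\mathcal{F}_\alpha)$ agrees with $\pi_0(G(X,\rho)_\alpha)$ on vertex partitions. \textbf{Case $\alpha>\epsilon_i$.} Now I claim $E(\mathcal{F}_\alpha)$ induces the same partition as $E(G(X,\rho)_\alpha)$ by showing each contains enough of the other. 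One inclusion: $E(H(i)_\alpha)\subseteq E(G(X,\rho)_\alpha)$ by \Cref{lem:edges}\eqref{lem:edges_1} applied at scale $\epsilon_i$ (all of $H(i)$'s edges have weight $\le\epsilon_i<\alpha$), and $E(G(X_{i+1},\rho^{X_{i+1}})_\alpha)\subseteq E(G(X,\rho)_\alpha)$ since $\rho^X\le\rho^{X_{i+1}}$; so one direction of \eqref{eqn:complete_Frankie_w_hdbscan} is immediate. Conversely, given $(p,q)\in E(G(X,\rho)_\alpha)$ I would apply \Cref{prop:successive_reach_dist} with $p_0=p$, $q_0=q$ to obtain $p_i,q_i\in J(X_i,\epsilon_i)\subseteq X_{i+1}$ with $\rho^{X_{i+1}}(p_i,q_i)\le\rho^X(p,q)\le\alpha$, hence $(p_i,q_i)\in E(G(X_{i+1},\rho^{X_{i+1}})_\alpha)\subseteq E(\mathcal{F}_\alpha)$, together with paths in $H(i)$ joining $p$ to $p_i$ and $q$ to $q_i$; since all edges of $H(i)$ have weight $\le\epsilon_i<\alpha$, these paths lie in $\mathcal{F}_\alpha$, and concatenating gives a $p$–$q$ path in $\mathcal{F}_\alpha$. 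Hence the two partitions coincide.

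The main obstacle I expect is the bookkeeping in the $\alpha\le\epsilon_i$ case: one must handle all scales below $\epsilon_i$ uniformly, not just the top slice $(\epsilon_{i-1},\epsilon_i]$, and be careful that \Cref{lem:edges}\eqref{lem:edges_2} is stated for the single slice $H(m)_\alpha$ with $\epsilon_{m-1}<\alpha\le\epsilon_m$ — so for $\alpha\le\epsilon_{i-1}$ one invokes it with the appropriate smaller $m$ and then notes $E(H(m)_\alpha)\subseteq E(H(i)_\alpha)$, which is where the nesting $E(H(m))\subseteq E(H(i))$ from \Cref{defn:frankie_stages} is needed. Everything else is a routine combination of the monotonicity $\rho^X\le\rho^A$ for $A\subseteq X$, the scale bound $\le\epsilon_i$ on edges of $H(i)$, and path concatenation; no new geometric input beyond \Cref{prop:successive_reach_dist} and \Cref{lem:edges} is required.
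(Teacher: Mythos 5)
Your proposal is correct and follows essentially the same route as the paper: \eqref{eqn:frankie_chunks} is read off from \Cref{lem:edges}, the case $\alpha>\epsilon_i$ uses \Cref{prop:successive_reach_dist} together with the monotonicity $\rho^X\le\rho^{X_{i+1}}$ and path concatenation in $H(i)\subset\mathcal{F}(i)_\alpha$, and the case $\alpha\le\epsilon_i$ is reduced to \eqref{eqn:frankie_chunks}. The only local difference is that for $\alpha\le\epsilon_i$ the paper disposes of the extra edges by observing $E(G(X_{i+1},\rho^{X_{i+1}})_\alpha)\subset E(H(i)_\alpha)$ directly, whereas you route them through $G(X,\rho)_\alpha$ and \Cref{lem:edges} \cref{lem:edges_2}; both work, and your explicit treatment of the scales $\alpha\le\epsilon_{i-1}$ via smaller $m$ and the nesting $E(H(m)_\alpha)\subset E(H(i)_\alpha)$ spells out bookkeeping the paper leaves implicit.
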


\begin{proof}
Fix $i\in\N$.
\Cref{eqn:frankie_chunks} follows immediately from \Cref{lem:edges}.

To prove \Cref{eqn:complete_Frankie_w_hdbscan} when $\alpha \leq \epsilon_i$, first notice that
\[
    E(G(X_{i+1},\rho^{X_{i+1}})_\alpha) \subset E(H(i)_\alpha).
\] 
Consequently, the result follows from \Cref{eqn:frankie_chunks}. 

Now suppose that $\alpha>\epsilon_i$ and note that 
\[
    \mathcal{F}(i)_\alpha = G(X_{i+1},\rho^{X_{i+1}})_\alpha\cup H(i).
\] Since $\rho^{X_{i+1}}\leq \rho^X$ wherever they are both defined,
\[
E(G(X_{i+1},\rho^{X_{i+1}})_\alpha)\subset E(G(X,\rho^X)_\alpha).
\] By \Cref{lem:edges} \cref{lem:edges_1}, 
\[
E(H(i)_\alpha)\subset E(G(X,\rho^X)_\alpha).
\] Therefore 
\[
E(\mathcal{F}_\alpha) \subset E(G(X,\rho^X)_\alpha)    
\] and so the first implication in \Cref{eqn:complete_Frankie_w_hdbscan} holds.

Finally, assume $(p_0,q_0)\in E(G(X,\rho^X)_\alpha)$. Let $p_i,q_i\in J(X_i,\epsilon_i)\subset X_{i+1}$ be as in \Cref{prop:successive_reach_dist} so that
\[
\rho^{X_{i+1}}(p_i,q_i)\leq \rho^X(p_0,q_0)\leq \alpha.    
\] In particular, $(p_i,q_i)\in E(G(X_{i+1},\rho^{X_{i+1}})_\alpha)$. Since there exist paths in $H(i)$ connecting $p_0$ to $p_i$ and $q_0$ to $q_i$, there exists a path from $p_0$ to $q_0$ in $\mathcal{F}(i)_\alpha$. Consequently, the second implication in \Cref{eqn:complete_Frankie_w_hdbscan} holds.

\end{proof}

\section{Example: \texorpdfstring{$\SDBS^*$}{S-DBSCAN*} to cluster building data of the US}\label{sec:applications_QDBS}

To illustrate our methods, we cluster datasets of building footprints in the United States. 
We use the Microsoft Open Buildings dataset \cite{MicrosoftBuildingData} which is freely available. 

We cluster two different datasets using $\SDBS^*$ and sklearn.$\DBS^*$ and compare the runtime in each instance. 
Our first dataset corresponds to the state of Utah which consists of 1,004,734 data points. 
The second dataset corresponds to the entire United States and has size 124,828,547. 
For each of these datasets, we produce the $\DBS^*$ clusters for $\epsilon =$ 3,000 and $\epsilon = $12,000, each with $k = $1,900. 
In each instance, the run time is recorded in seconds.

These experiments were performed using an Intel Xeon CPU running at 2.30GHz with 64GB of memory on 64-bit Ubuntu 20.04.3 LTS.

Tables \ref{tab:SDBSCAN_utah} and \ref{tab:SDBSCAN_usa} show the runtime of $\SDBS^*$ and sklearn.$\DBS^*$, the resulting number of core points $|\C|$, number of clusters $|\D^*|$ and maximum and mean cluster sizes. 
It should be noted that the implementation of $\SDBS^*$ is not optimized and relies purely on python code which negatively affects performance. On the other hand, sklearn.$\DBS^*$ is optimized for speed and not memory efficiency. Five workers are used for all runs of the $\SDBS^*$ algorithm.

\begin{table}[ht!]

\begin{tabular}{|c|c|c|c|c|c|c|c|}
\hline
& $\epsilon$ & Time & $|\C|$ & $|\D^*|$ & $\max|C|$ & mean $|C|$ \\ \hline
$\SDBS^*$        & 2,000    & 34                 & 670,611      & 30       & 305,024           & 22,354           \\ \hline
$\SDBS^*$        & 8,000    & 6                  & 877,112      & 19       & 741,710           & 46,164 \\ \hline
$\DBS^*$ & 2,000    & 91                 & 670,611      & 30       & 305,024           & 22,354           \\ \hline
$\DBS^*$ & 8,000   & -                  & -           & -        & -                & -                 \\ \hline
\end{tabular}
\vspace{0.2cm}
\caption{$\DBS^*$ for Utah}
\label{tab:SDBSCAN_utah}
\end{table}

\begin{table}[ht!]
\begin{tabular}{|c|c|c|c|c|c|c|}
\hline
& $\epsilon$ & Time & $|\C|$ & $|\D^*|$ & $\max|C|$ & mean $|C|$\\ \hline
$\SDBS^*$        & 3,000    & 1,752               & 79,919,086    & 3,244     & 6,237,660          & 24,636        \\ \hline
$\SDBS^*$        & 12,000   & 961                & 120,976,900   & 429      & 94,517,493         & 281,997          \\ \hline
$\DBS^*$ & 3,000    & -                  & -           & -        & -                & -                 \\ \hline
$\DBS^*$ & 12,000   & -                  & -           & -        & -                & -                 \\ \hline
\end{tabular}
\vspace{0.2cm}
\caption{$\DBS^*$ for USA}
\label{tab:SDBSCAN_usa}
\end{table}

In \Cref{tab:SDBSCAN_utah} it can be seen that, for the state of Utah, $\SDBS^*$ has a lower runtime than sklearn.$\DBS^*$ for $\epsilon =2,000$ .  
Furthermore, for $\epsilon = $ 8,000 sklearn.$\DBS^*$ ran out of memory. In contrast, the runtime of $\SDBS^*$ \emph{decreases} for the larger value of $\epsilon$.
This is because more cubes of the partition become classified as dense, and consequently, the required number of distance calculations significantly decreases.

\Cref{tab:SDBSCAN_usa} shows the running time of $\SDBS^*$ and sklearn.$\DBS^*$ when clustering building data of the US. 
For all parameter choices sklearn.$\DBS^*$ runs out of memory. 
Similarly to the Utah dataset, for the larger value of $\epsilon$, the $\SDBS^*$ runtime decreases. 

\section{Example: \texorpdfstring{$\SHDBS^*$}{S-HDBSCAN*} to cluster building data of the US}\label{sec:applications_QHDBS}

Recall that, for a fixed value of $\epsilon$, the connected components of $\DBS^*$ (with more than one element) coincide with the connected components of $G(X,\rho)_\epsilon$, used to construct the $\HDBS^*$ clusters, see \cref{eqn:dbs_components_equal_hdbs_components}. 
This fact allows us to construct $\HDBS^*$ in steps by first separating the data into $\DBS^*$ clusters and then obtaining the connected components of $G(X,\rho)_\epsilon$ restricted to each $\DBS^*$ cluster. 

More explicitly the steps of $\SHDBS^*$ are:
\begin{enumerate}
\item Cluster $X_1 = X $ using $\SDBS^*$ with an initial choice of $\epsilon_1$. 
\item For each cluster $C\in \D^*(X_1,\epsilon_1)$, we calculate the reachability distance $\tilde{\rho}$ using \emph{only} points in $C^\m$ and obtain the weighted graph $G(C,\tilde{\rho})_{\epsilon_1}$. 
\item Combine an extended boundary of each cluster and $\NN(X_1,\epsilon_1)$ to form $X_2$. Repeat steps 1 and 2 on $X_2$ for a choice of $\epsilon_2 > \epsilon_1$. 
\item Combine an extended boundary of each cluster in $\D^*(X_2,\epsilon_2)$ and $\NN(X_2,\epsilon_2)$ to form $X_3$.
\item Calculate $\tilde{\rho}$ in $X_3$ and the weighted graph $G(X_3,\tilde{\rho})$.
\item Combine all weighted graphs produced at each step to produce our final graph $G$. \Cref{thm:1} guarantees that, for any $\epsilon > 0$, the connected components of $G_\epsilon$ agree with those of $G(X,\rho)_\epsilon$.
\end{enumerate}
This procedure can be iterated any number of times, we use $3$ for illustration only. 
At each step, each $\SDBS^*$ cluster is processed independently and in parallel.

One of the fastest implementations of $\HDBS^*$ is sklearn.$\HDBS^*$, \cite{McInnes17}. 
The speed of this algorithm comes from combining several high performance algorithms used to optimise the steps of $\HDBS^*$ with high time complexity. 
A central tool used in these optimisations are \emph{$k$-d trees}, which are state-of-the-art in nearest neighbour type searches. 

A $k$-d tree partitions $\R^n$ into non-overlapping regions, indexed by a given \emph{reference tree}  $\tau_r\subset X\subset\R^n$. 
Given a \emph{query point} $p\in\tau_q\subset X$, one can use the partition to efficiently find the nearest neighbours in $\tau_r$ to $p$. 
To further optimise performance one can build a second partition based on the \emph{query tree} $\tau_q$.

In \cite{McInnes17}, query and reference trees are used to calculate $\core_k$ and a \emph{minimum spanning tree}. 
In practice, given a weighted graph $H$, to obtain the connected components of $H_\epsilon$, for every $\epsilon$, a standard and efficient technique is to store a minimum spanning tree of $H$. This is done using the dual tree Borůvka algorithm. 

The performance and memory usage of calculating $\core_k$ and the Borůvka algorithm worsens as the size of the reference and query trees increase. 
In fact, in \cite{hbscan_benchmarking} it can be seen that clustering very large datasets using sklearn.$\HDBS^*$ becomes infeasible. 
An underlying reason for this is that these trees are constructed using the entire dataset. 
A common solution is to process the points in the query tree in batches. 
However, given the size of our dataset and our choice of $k$, the reference trees are simply too large and batch processing the query trees alone does not make the task tangible.

To solve this issue $\SHDBS^*$ partitions $X$ to create many smaller trees. 
The difficulty lies in partitioning the reference trees whilst ensuring that the correct $\core_k$ and the correct weights of a spanning graph are produced. 
Since $\DBS^*$ clusters naturally partition $X$ they can be used to achieve this. 
We take this further and iteratively use $\DBS^*$ clusters to incrementally construct a spanning graph. 
After each iteration we can remove points from the dataset that are now redundant, namely the interiors of clusters of previous iterations. 
Moreover, the calculations for each cluster are independent and are processed in parallel. 
A crucial point is that, the minimum spanning trees of each cluster are combined into a graph with a relatively small amount of edges that, by \Cref{thm:1}, allows us to recover the connected components required for $\HDBS^*$.  

In practice, from one iteration to the next we only add edges $(p,q)$ with weight $\epsilon_{i-1}< \omega(p,q)\leq \epsilon_i$ because this is sufficient to complete a small spanning graph. 
Further, we only add an edge $(p,q)$ if $p$ and $q$ were not in the same connected component in a previous iteration.

We also note that $\SHDBS^*$ can be used recursively: if a particular cluster cannot be processed directly, the same approach can be applied to the cluster.

To demonstrate our theory, we built an implementation of $\SHDBS^*$ and ran it on the same example datasets than \Cref{sec:applications_QDBS} (using the same computer). 
Since this was for demonstration purposes only it consists of unoptimised Python scripts. As a point of comparison, we also ran sklearn.$\HDBS^*$.

The results are shown in \Cref{tab:hdbscans}, with time measured in seconds. 
The three final columns refer to the final $\HDBS^*$ clusters, after the hierarchical clusters have been scored across all  scales. 
In the case of Utah, where sklearn.$\HDBS^*$ completes, we see that $\SHDBS^*$ has a comparable running time. 
However for the entire dataset, sklearn.$\HDBS^*$ does not terminate, whereas $\SHDBS^*$ does. 

\begin{table}[ht!]
\begin{tabular}{|c|c|c|c|c|c|}
\hline
& & Time &  $|\mathcal{H}^*|$ & $\max|C|$ & mean $|C|$ \\ \hline
$\SHDBS^*$ & Utah   & 649                 & 40       & 307,149           & 20,386         \\ \hline
$\HDBS^*$ & Utah    & 415    & 40       & 307,149           & 20,386         \\ \hline

$\SHDBS^*$ & USA             & 89,670               & 5,222     & 44,515,947         & 23,904         \\ \hline
$\HDBS^*$ & USA   & -                & -                 & -                & -                 \\ \hline
\end{tabular}
\vspace{0.2cm}
\caption{Final $\HDBS^*$ clusters, $k = m = 1900$}
\label{tab:hdbscans}
\end{table}

\subsection{Tree sizes at each iteration}
We now illustrate the reduction of the query and reference trees, at each iteration, granted by our theory. 
For the first iteration, we use $\epsilon_1 = 3,000$ and $\SDBS^*$ produces 3,244 clusters of the entire dataset $|X_1| =124,828,547$. 
For the second iteration we use $\epsilon_2 = 12,000$ and $\SDBS^*$ produces 429 clusters of $|X_2| = 110,943,965$. 
The last step of the process uses $|X_3| = 66,145,965$. We refer the reader to \cref{tab:SDBSCAN_usa} for details of the $\DBS^*$ clusters of $X_1$, relevant to iterations 1 and 2.

Since our datasets are contained in $\R^2$, we have $\n =1$, $\m = 3$ and $\nn = 4$. 
\Cref{tab:first_iter} contains the size distribution of the query tree $\tau_q(C)=C$ and reference tree $\tau_r(C)$ of clusters $C\in\D^*(X_1,\epsilon_1)$ used to calculate $\core_k$. 
By \Cref{obs:epsilon_neighood_extension}, it suffices to take $\tau_r(C)=C^3$ as a reference tree to calculate $\core_k(p)$ for points $p\in C$. 
The query and reference tree required to build the minimum spanning tree of $C$ is $C$. \Cref{tab:first_iteration_tree_sizes_largest_clusters} contains the sizes of these trees for the 10 largest clusters. 

\begin{table}[ht!]
    \begin{tabular}{|c|c|c|} 
    \hline
     & $\tau_q(C)$ & $\tau_r(C)$\\ \hline
    mean  & 24,636        & 29,059            \\ 
    std   & 179,487      & 190,162     \\ 
    min   & 1            & 1,956       \\ 
    25\%  & 1,908        & 3,702      \\ 
    50\%  & 3,204        & 5,692        \\ 
    75\%  & 7,080        & 10,742      \\ 
    max   & 6,237,660    & 6,745,531   \\ \hline
    count & \multicolumn{2}{|c|}{3,244}\\ \hline
    \end{tabular}
    \vspace{0.2cm}
    \caption{First iteration: query and reference tree sizes.}
    \label{tab:first_iter}
    \end{table}

\begin{table}[ht!]
    \begin{tabular}{|c|c|c|c|}
    \hline 
    $\tau_q(C)$ & $\tau_r(C)$ & $\tau_q(C)$ & $\tau_r(C)$ \\ \hline
6,237,660   & 6,745,531   & 5.00\%      & 5.40\%      \\
4,055,724   & 4,090,661   & 3.25\%      & 3.28\%      \\
3,089,031   & 3,204,504   & 2.47\%      & 2.57\%      \\
2,162,629   & 2,285,680   & 1.73\%      & 1.83\%      \\
1,975,669   & 2,202,149   & 1.58\%      & 1.76\%      \\
1,721,895   & 1,814,330   & 1.38\%      & 1.45\%      \\
1,680,441   & 1,760,533   & 1.35\%      & 1.41\%      \\
1,661,300   & 1,740,421   & 1.33\%      & 1.39\%      \\
1,453,234   & 1,574,444   & 1.16\%      & 1.26\%      \\
1,450,173   & 1,556,962   & 1.16\%      & 1.25\%      \\\hline
25,487,756  & 26,975,215  & 20.42\%     & 21.61\%    \\\hline
\end{tabular}
    \vspace{0.2cm}
    \caption{First iteration: tree sizes of largest clusters and their percentage of the whole dataset.}
    \label{tab:first_iteration_tree_sizes_largest_clusters}
\end{table}


This iteration of $\SHDBS^*$ processes $\core_k$ and the minimum spanning tree of approximately 80 million points, see \Cref{tab:SDBSCAN_usa}. \Cref{tab:first_iter,tab:first_iteration_tree_sizes_largest_clusters} show that this is achieved using reference trees with less than 7 million points, 5\% of the entire dataset. 
In fact, about 55 million of these points (44\% of the dataset) are processed with query and reference trees smaller than 1.5 million points. \Cref{tab:first_iter} shows that 75\% of the clusters have reference trees with less than 11,000 points, four orders of magnitude smaller than the entire dataset.

This massive reduction of the sizes of these reference trees is what makes it possible for us to cluster a dataset of this size, which we do in 24hrs. In data of this magnitude, it is appropriate to model the complexity of sklearn.$\HDBS^*$ algorithm as $O(n^2)$, \cite{hbscan_benchmarking}. Under this assumption, dividing the dataset into such small pieces is a significant improvement.

There are two possible ways to continue constructing a spanning graph from scales $\epsilon_1$ to $\epsilon_2$. The first is to repeat the first iteration with $\epsilon_2$ but only processing noise points from the first iteration. That is, for each $C\in\D^*(X_1,\epsilon_2)$ the trees  $\tau^c_q(C)=C\cap\NN(X_1,\epsilon_1)$ and $\tau^c_r(C) = C^3$ calculate $\core_k$. The tree $\tau^m = C$ is used as a query and reference tree to calculate the minimum spanning tree. The sizes of these trees for the 10 largest clusters are contained in the first three columns of \Cref{tab:second_iteration_tree_sizes_largest_clusters}.  \Cref{tab:second_iteration_percent_tree_sizes_largest_clusters} shows the same values as a percentage of the size of the entire dataset.

We consider an alternative way that reduces the sizes of the trees even further by first removing an interior from each cluster of the first iteration. More precisely, set $X_2 = F(X_1,\epsilon_1)$ as in \Cref{defn:good_set_reach} (keeping only the 4th extension of the boundary of each cluster) and, for each $C\in\D^*(X_2,\epsilon_2)$, set $\tau^c_q(C) = C\cap \NN(X_1,\epsilon_1)$ and $\tau^c_r(C) = C^3$. To calculate the minimum spanning tree, the query and reference trees agree and equal $\tau^m(C) = C\cap J(X_1,\epsilon_1)$ (as defined in \Cref{defn:good_set_reach}, keeping only the 1st extension of the boundary of previous iteration clusters). The sizes of these trees for the 10 largest clusters are shown in columns 1, 4 and 5 of \Cref{tab:second_iteration_tree_sizes_largest_clusters,tab:second_iteration_percent_tree_sizes_largest_clusters}.

By \Cref{thm:1} and \Cref{cor:reach_dist_to_cluster_attained_near_bdry}, the spanning graphs produced by either of these methods are equivalent to $G(X,\rho)_{\epsilon}$ for any $\epsilon\leq \epsilon_2$. 

\begin{table}[ht!]
    \begin{tabular}{|c|c|c|c|c|c|}
        \hline
    Both          & \multicolumn{2}{|c|}{With interior}     & \multicolumn{2}{|c|}{Without} \\ \hline
    $\tau^c_q(C)$ & $\tau^c_r(C)$ & $\tau^m(C)$ & $\tau^c_r(C)$ & $\tau^m(C)$ \\ \hline
    36,039,363    & 95,728,364    & 94,517,493  & 85,297,674    & 63,373,014  \\
    984,007       & 10,580,949    & 10,495,579  & 8,697,199     & 4,699,615   \\
    806,604       & 3,575,609     & 3,530,441   & 3,157,678     & 2,039,457   \\
    242,434       & 1,553,354     & 1,519,012   & 1,241,211     & 740,889     \\
    208,895       & 1,530,487     & 1,518,726   & 1,157,362     & 537,396     \\
    170,270       & 813,466       & 797,390     & 740,581       & 425,973     \\
    105,852       & 578,156       & 576,816     & 530,175       & 362,543     \\
    89,095        & 545,200       & 498,704     & 477,756       & 356,340     \\
    83,919        & 477,756       & 443,445     & 391,985       & 231,348     \\
    67,557        & 454,799       & 441,073     & 369,047       & 222,249     \\ \hline
    38,797,996    & 115,838,140   & 114,338,679 & 102,060,668   & 72,988,824 \\ \hline
    \end{tabular}
    \vspace{0.2cm}
    \caption{Second iteration: tree sizes of largest clusters}
    \label{tab:second_iteration_tree_sizes_largest_clusters}
    \end{table}

\begin{table}[ht!]
        \begin{tabular}{|c|c|c|c|c|c|} \hline
        Both          & \multicolumn{2}{|c|}{With interior}     & \multicolumn{2}{|c|}{Without} \\ \hline
        $\tau^c_q(C)$ & $\tau^c_r(C)$ & $\tau^m(C)$ & $\tau^c_r(C)$ & $\tau^m(C)$ \\ \hline
        28.87\%       & 76.69\%       & 75.72\%     & 68.33\%       & 50.77\%     \\
        0.79\%        & 8.48\%        & 8.41\%      & 6.97\%        & 3.76\%      \\
        0.65\%        & 2.86\%        & 2.83\%      & 2.53\%        & 1.63\%      \\
        0.19\%        & 1.24\%        & 1.22\%      & 0.99\%        & 0.59\%      \\
        0.17\%        & 1.23\%        & 1.22\%      & 0.93\%        & 0.43\%      \\
        0.14\%        & 0.65\%        & 0.64\%      & 0.59\%        & 0.34\%      \\
        0.08\%        & 0.46\%        & 0.46\%      & 0.42\%        & 0.29\%      \\
        0.07\%        & 0.44\%        & 0.40\%      & 0.38\%        & 0.29\%      \\
        0.07\%        & 0.38\%        & 0.36\%      & 0.31\%        & 0.19\%      \\
        0.05\%        & 0.36\%        & 0.35\%      & 0.30\%        & 0.18\%      \\ \hline
        31.08\%       & 92.80\%       & 91.60\%     & 81.76\%       & 58.47\%    \\ \hline
        \end{tabular}
\vspace{0.2cm}
    \caption{Second iteration: percentage of total dataset of tree sizes of largest clusters}
    \label{tab:second_iteration_percent_tree_sizes_largest_clusters}
\end{table}

In this second iteration there is a very large cluster containing 95 million points, which has large reference and query trees as a result. However, this cluster consists only of 37 million new points and, at most, 3,244 clusters already formed in the iteration 1. Consequently, our calculations of $\core_k$ restrict to the new 37 million points and the Borůvka algorithm only requires 37 million iterations, instead of 95 million, to complete the spanning graph of this cluster.
We can see in \Cref{tab:second_iteration_tree_sizes_largest_clusters,tab:second_iteration_percent_tree_sizes_largest_clusters} that the trees corresponding to the remaining 428 $\DBS^*$ clusters have a comparable distribution to those of the first iteration. 

\Cref{tab:second_iteration_percent_tree_sizes_largest_clusters} shows that removing the interior of the clusters from iteration 1 reduces the size of the reference trees by about 10\% of the total dataset when calculating $\core_k$ and by 30\% when calculating the spanning graphs.

For the final iteration, we calculate core distances with a single query tree $\tau^c_q = \NN(X_1,\epsilon_2)$. If we keep the interiors of clusters from the previous iterations, we must use $\tau^c_r =  X_1$ as a reference tree. However, we can reduce the reference tree significantly by removing the interiors of clusters giving $\tau^c_r =  X_3$. For the minimum spanning tree we have query and reference trees equal to $\tau^m =X_1$ in the first case and $\tau^m = J(X_2,\epsilon_2)$ in the second case.

\begin{table}[ht!]
\begin{tabular}{|c|c|c|c|c|}\hline
    Both          & \multicolumn{2}{|c|}{With interior}     & \multicolumn{2}{|c|}{Without} \\ \hline
$\tau^c_q$ & $\tau^c_r$ & $\tau^m$ & $\tau^c_r$ & $\tau^m$ \\\hline
3,851,647     & 124,828,547   & 124,828,548 & 66,145,965    & 28,931,402  \\
3.09\%        & 100\%      & 100\%    & 52.99\%       & 23.18\%   \\ \hline
\end{tabular}
\vspace{0.2cm}
\caption{Final iteration: tree sizes and their percentage of the whole dataset.}
\label{tab:third_iteration_tree_sizes_largest_clusters}
\end{table}

This iteration highlights the importance of removing the interiors of previous iterations. In \Cref{tab:third_iteration_tree_sizes_largest_clusters} we see that calculating $\core_k$ has a relatively small query tree and that the impact of removing the interiors of previous iterations is significant, reducing the size of the reference tree by almost 50\%. For calculating the spanning graph, the reduction from removing the interiors is even greater and reduces the size of both the query and the reference trees by almost 80\% of the total dataset.

\subsection{Final remarks}

There is a lot of potential to implement $\SHDBS^*$ in parallel and to employ sophisticated algorithms that optimise its performance. Moreover, the number of points that we removed in this final iteration motivates further study in how to optimally choose the $\epsilon_i$ to have a greater impact by removing larger interiors in each iteration.

\bibliography{references}\bibliographystyle{abbrvurl}

\end{document}